% sage_latex_guidelines.tex V1.20, 14 January 2017
\documentclass[11pt]{article}

\usepackage[colorlinks,linkcolor=blue,citecolor=blue,urlcolor=blue]{hyperref}

\usepackage{url}
\usepackage{graphicx}
\usepackage[font=scriptsize]{caption}
\usepackage{amsmath}

\usepackage{amsfonts}
\newcommand\numberthis{\addtocounter{equation}{1}\tag{\theequation}}
\usepackage{xcolor}
\usepackage{algorithm,algorithmic}
\usepackage{tabularx} % also loads 'array' package
\usepackage{amsthm}
\usepackage{amsfonts}
\usepackage{amsmath}
\usepackage{amssymb}
\usepackage{mathrsfs}
\usepackage{multirow}
\usepackage{bbm}
\usepackage[round,sort]{natbib}
\setcounter{secnumdepth}{3}
\usepackage{fullpage}

\newtheorem{theorem}{Theorem}
\newtheorem{definition}{Definition}
\newtheorem{corollary}{Corollary}
\newtheorem{remark}{Remark}

\newtheorem{proposition}{Proposition}
\newtheorem{lemma}{Lemma}
\newcounter{assump}
\newtheorem{assumption}[assump]{Assumption}

\DeclareMathOperator*{\argmin}{arg\,min}

\DeclareMathOperator*{\sol}{SOL}

\DeclareMathOperator*{\vi}{VI}

\newcommand{\st}{\text{ s.t. }}

\newcommand{\beq}{\begin{equation}}

\newcommand{\eeq}{\end{equation}}
\newcommand{\beqs}{\begin{equation*}}
\newcommand{\eeqs}{\end{equation*}}

\renewcommand{\S}{\mathcal{S}}

\newcommand{\K}{\mathcal{K}}

\newcommand{\R}{\mathbb{R}}

\newcommand{\U}{\mathcal{U}}

\newcommand{\M}{\mathcal{M}}

\newcommand{\define}{\stackrel{\text{def.}}{=}}

\newcommand{\E}{\mathbb{E}}

\newcommand{\D}{\mathcal{D}}

\newcommand{\<}{\langle}
\renewcommand{\>}{\rangle}

\renewcommand{\Pr}{\mathbb{P}}

\usepackage{authblk}

\begin{document}

\title{Dynamic Regret Convergence Analysis and an Adaptive Regularization Algorithm for On-Policy Robot Imitation Learning}
	
	\author[1]{Jonathan N. Lee\thanks{jonathan\_lee@berkeley.edu}}
	\author[2]{Michael Laskey}
	\author[1]{Ajay Kumar Tanwani}
	\author[1]{\\Anil Aswani}
	\author[1]{Ken Goldberg}
	\affil[1]{University of California, Berkeley}
	\affil[2]{Toyota Research Institute}

	\renewcommand\Authands{ and }

\maketitle

\begin{abstract}

On-policy imitation learning algorithms such as DAgger evolve a robot control policy by executing it, measuring performance (loss), obtaining corrective feedback from a supervisor, and generating the next policy. As the loss between iterations can vary unpredictably, a fundamental question is under what conditions this process will eventually achieve a converged policy.  If one assumes the underlying trajectory distribution is static (stationary), it is possible to prove convergence for DAgger. However, in more realistic models for robotics, the underlying trajectory distribution is dynamic because it is a function of the policy. Recent results show it is possible to prove convergence of DAgger when a regularity condition on the rate of change of the trajectory distributions is satisfied. In this article, we reframe this result using dynamic regret theory from the field of online optimization and show that dynamic regret can be applied to any on-policy algorithm to analyze its convergence and optimality. These results inspire a new algorithm, Adaptive On-Policy Regularization (\textsc{Aor}), that ensures the  conditions for convergence. We present simulation results with cart-pole balancing and locomotion benchmarks that suggest \textsc{Aor} can significantly decrease dynamic regret and chattering as the robot learns. To our knowledge, this the first application of dynamic regret theory to imitation learning.

\end{abstract}

\maketitle

\section{Introduction}\label{sec:intro}

There has been great interest in learning-based methods to enable robots to accomplish difficult tasks autonomously. One of the foremost methods is learning by demonstration, also known as imitation learning.
In imitation learning, a robot observes states and control labels from a supervisor and estimates a mapping from states to controls. A fundamental problem in imitation learning is covariate shift \citep{bagnell2015invitation}, where the distribution of trajectories experienced by the robot at run time differs from the distributions experienced during training time. For example, consider an autonomous vehicle trained to drive on a road from demonstrations of humans driving safely on the center of the same road. If the vehicle makes slight errors when it is deployed, it may drift towards the sides of the road where it had not previously experienced data from human supervisors, resulting in situations from which it cannot recover.

On-policy Imitation learning algorithms such as DAgger \citep{ross2010reduction}, AggreVaTeD \citep{sun2017deeply}, LOKI \citep{cheng2018fast}, and SIMILE \citep{le2016smooth} have been proposed to mitigate this issue. As opposed to learning only from supervisor demonstrations, these algorithms roll out the robot's current policy at each iteration, allowing it to make errors and observe new states. The supervisor then provides corrective control labels for these new states retroactively. For this reason, these algorithms are often referred to as on-policy imitation learning algorithms because the robot iteratively learns from its current policy \citep{osa2018algorithmic} much like an on-policy reinforcement learning algorithm. This is in contrast to off-policy algorithms where the robot learns from passively observing the supervisor's demonstrations.

On-policy algorithms for imitation learning have been extremely successful in robot learning applications \citep{ross2013learning,pan2017agile,duvallet2013imitation,duan2017one,zhang2017query}. However, the theoretical understanding of these algorithms is limited. For a given algorithm, we might ask the following fundamental questions: Does it return an ``optimal'' solution? Does it provably converge to a policy? If so, what are the properties of this converged policy? The original work of \cite{ross2010reduction} on DAgger sought to bound the worst-case average loss of a policy returned from the algorithm. However, the aforementioned questions remained open. Recently, the work of \cite{cheng2018convergence} proved convergence for the case of DAgger under regularity conditions on the dynamics. In this work, we aim to extend these prior results by analyzing a general formulation of on-policy algorithms in an effort to answer the above questions directly. We examine these algorithms through the lens of modern results in dynamic regret theory from the field of online optimization \citep{zinkevich2003online}.

%Recently, there has been interest in determining when on-policy algorithms are guaranteed to converge to good policies because practitioners often settle on the policy from the final iteration of the algorithm.  
%Cheng and Boots \cite{cheng2018convergence} proved that \textsc{Dagger} converges under the condition that a sensitivity parameter related to the rate of change of the trajectory distributions is small.

% \mlnote{why?, justify this with Cheng's example of using the last policy}

On-policy algorithms can be viewed as derivatives of algorithms from online optimization \citep{ross2010reduction, hazan2016introduction}, a rich theoretical field often used to analyze problems such as portfolio management and network routing analysis where environments change over time \citep{hazan2007adaptive}. The general forumation is as follows: at iteration $n$, a learner makes a decision $\theta_n$ from a set of decisions $\Theta$ and then the environment presents some loss function $f_n :\Theta \mapsto \R$. The learner aims to minimize the incurred losses $f_n(\theta_n)$ over time. In on-policy imitation learning, $\theta_n$ would correspond to the parameters of the robot's policy. The loss $f_n$ would be a supervised learning loss function obtained from rolling out $\theta_n$ and observing corrective labels from the supervisor. At each iteration the supervised learning loss function is different because the distribution of trajectories induced by the robot changes as the policy is updated. The goal is to continually try to find the optimal $\theta_n$ at each iteration based on the sequence of past loss functions. 
A common choice in online optimization to measure the performance of an algorithm  is static regret over $N$ iterations, defined as
\begin{align}\label{eq:static-regret}
R_S(\theta_1, \ldots, \theta_N) \define \sum_{n = 1}^N f_n(\theta_n) - \min_{\theta} \sum_{n = 1}^N f_n(\theta).
\end{align}
In imitation learning, this would mean the robot is compared against the best it could have done on the average of its trajectory distributions seen in the past.

% \mlnote{cut all mention of static regret and start here after the introduction of OCO, your goal is to understand convergence ... so why does static regret matter in the intro}
In this article, we focus on determining when on-policy algorithms can and cannot converge and specifically when the policy is performing optimally on its own distribution. We draw a connection between this very natural objective and an alternative metric known as \textit{dynamic regret}, which has recently gained significant traction in the online optimization community \citep{hall2015online,mokhtari2016online,yang2016tracking}. As opposed to the well known static regret, dynamic regret measures performance of a policy at each instantaneous iteration:
\begin{align}\label{eq:dynamic-regret}
R_D(\theta_1, \ldots, \theta_N) \define \sum_{n = 1}^N f_n(\theta_n) - \sum_{n = 1}^N \min_{\theta} f_n(\theta).
\end{align}

The difference between static and dynamic regret is that, for dynamic regret, the minimum resides inside the summation, meaning that the regret is an instantaneous difference at each iteration. Proving that static regret is low implies that the policies are at least as good as a single fixed policy that does well on the average of the distributions seen during training. This was precisely the original result of \cite{ross2010reduction}. In constrast, proving that dynamic regret is low implies that the policies are optimal on their own distribution. Consequently, dynamic regret can be more relevant as a theoretical metric in robotics where the trajectory distributions are changing, and understanding the dynamic regret of an algorithm positions us to answer the aforementioned questions about on-policy imitation learning.

As an example of the difference in practice, consider again the autonomous driving scenario using an on-policy algorithm where the car will encounter both challenging safety-critical situations, such as avoiding collisions, and relatively easy situations, such as lane-following. Low static regret implies that car the will do well in many of these situations on average. However, this could mean the policy is good at lane-following but highly suboptimal at avoiding collisions even if it has to encounter both situations at run time. However, low dynamic regret offers a much safer guarantee by ensuring that the car will be optimal (among its policy class) with respect to the situations it actually encounters at run time.

However, showing an algorithm achieves low dynamic regret is inherently harder than achieving low static regret because $R_S \leq R_D$. Furthermore, it is well known that it is not possible to guarantee low dynamic regret in general due to the possibility of adversarial loss functions \citep{yang2016tracking}. The key insight in imitation learning, however, is that the loss functions at each iteration represent the trajectory distributions as a function of the  policy parameters. Therefore, we can leverage information known about how the trajectory distribution changes in response to changing policy parameters to obtain interpretable dynamic regret rates. In particular, we rely on the continuity of the trajectories as a function of the policy \citep{cheng2018convergence}. 

This paper makes four contributions:
\begin{enumerate}
	\item Introduces a novel dynamic regret analysis to evaluate the convergence of on-policy imitation learning algorithms.
	\item Presents average dynamic regret rates and conditions for convergence for DAgger, Imitation Gradient, and Multiple Imitation Gradient.
	\item Introduces Adaptive On-Policy Regularization (\textsc{Aor}), a novel algorithm that adaptively regularizes on-policy algorithms to improve dynamic regret and induce convergence.
	\item Presents empirical evidence of non-convergent on-policy algorithms and shows that \textsc{Aor} can ensure convergence in a cart-pole balancing task and locomotion tasks.
\end{enumerate}

This article is a significantly revised and extended version of our conference publication at the Workshop on the Algorithmic Foundations of Robotics \citep{lee2018dynamic,lee2018stability}. In particular, this article (1) presents new theoretical results, greatly extending the formalization of dynamic regret as a metric in imitation learning; (2) provides detailed examples and analysis of well known systems that satisfy the continuity condition required in the theory; (3) explores connections with our subsequent work in Continuous Online Learning \citep{cheng2019online} and the variational inequality problem;
(4) presents new experimental results.

\section{Related Work}\label{sec:related}
The challenge of covariate shift in imitation learning by supervised learning is the subject of significant research in robotics. It is especially prevalent when the robot's policy cannot fully represent the supervisor \citep{laskey2017comparing}. In robotics, many algorithms have been proposed to mitigate covariate shift for imitation learning. \cite{ross2010reduction} introduced DAgger, an on-policy algorithm that allows the robot to make mistakes and then observe corrective labels from the supervisor in new states that might not be seen from ideal supervisor demonstrations alone. 

Gradient-based on-policy methods for imitation learning have gained interest due to their similarity to policy gradient algorithms and their computational efficiency. These are also known as Imitation Gradient methods. AggreVaTeD \citep{sun2017deeply} was proposed for fast policy updates designed for deep neural network policies. LOKI \citep{cheng2018fast} uses a mirror descent algorithm on an imitation learning loss to bootstrap reinforcement learning.

\cite{ross2010reduction} first introduced a static regret-style analysis for DAgger, showing that with strongly convex losses, running DAgger results in low static regret in all cases. This means that a DAgger policy is on average at least as good as one policy that does well on the average of trajectory distributions seen during training. However, the performance on the average of trajectory distributions is not always informative, as shown by \cite{laskey2017comparing}, because the average may contain irrelevant distributions as a result of rolling out suboptimal policies. \cite{cheng2018convergence} recently expanded the DAgger analysis proving that, despite guaranteed convergence in static regret, the algorithm may not always converge to a low loss policy. Furthermore, they identified regularity conditions sufficient for convergence to local optima. This work extends the results of \cite{cheng2018convergence} by drawing a connection with dynamic regret theory to identify conditions for convergence for DAgger and other on-policy algorithms and as a basis for a new algorithm. 

In \cite{cheng2019online} (subsequent to \cite{lee2018dynamic}), we presented a notion of ``continuous online learning" where we studied the dynamic regret of online optimization problems with smoothly changing losses. In that work, we generalized the structure observed in the imitation learning problem to prove low dynamic regret more generally. We discuss connections to the continuous online learning problem in Section \ref{sec:col}.
%\red\sout{Portions of this paper were presented but not published at a RSS workshop \cite{lee2018stability}}}. 
%To the best of our knowledge, this is the first application of dynamic regret theory to imitation learning.

\section{Preliminaries}\label{prelims}

In this section, we review mathematical background and notation. We then introduce the problem statement for imitation learning by supervised learning.

\subsection{Mathematical Background}
Let $\Theta \subset \R^d$ be a convex and compact set with $l_2$-norm $\|\cdot \|$.
A differentiable function $h:\Theta \mapsto \R$ is said to be convex if $\forall \theta, \theta' \in \Theta$, \begin{align*}h(\theta) \geq h(\theta') + \<\nabla h(\theta'), \theta - \theta'\>.\end{align*}  It is strictly convex if the inequality is strict for all $\theta \neq \theta'$. It is $\alpha$-strongly convex if there exists $\alpha > 0$ such that \begin{align*}h(\theta) \geq h(\theta') + \< \nabla h(\theta') , \theta - \theta'\> + \frac{\alpha}{2}\| \theta - \theta'\|^2.\end{align*}
 Strongly convex functions are strictly convex, and strictly convex functions are convex. The function $h$ is said to be $\gamma$-smooth if its gradients are $\gamma$-Lipschitz continuous:
\begin{align*}
\| \nabla h(\theta) - \nabla h(\theta') \| \leq \gamma \| \theta - \theta'\|.
\end{align*}
An equivalent definition of $\gamma$-smoothness is that for all $\theta, \theta' \in \Theta$, 
\begin{align*}
h(\theta) \leq h(\theta') + \< \nabla h(\theta'), \theta - \theta'\> + \frac{\gamma}{2}\|\theta - \theta'\|^2,
\end{align*}
which parallels the definition of strong convexity. The Bregman divergence $\D_h : \Theta \times \Theta \mapsto \R_{\geq 0}$  with respect to a strictly convex function $h$ is defined as 
\begin{align*}
\D_h(\theta, \theta') = h(\theta) - h(\theta') - \< \nabla h(\theta'), \theta - \theta'\>.
\end{align*}

\subsection{Imitation Learning}
\paragraph{Setup.} We consider systems represented by a Markov decision process (MDP)  $ \M = \< \mathcal S, \U, p, T\>$, where $\mathcal S$ is a bounded set of states, $\U$ is a bounded set of allowable controls, $p: \mathcal S \times \U \mapsto \Pr(\mathcal S)$ is transition function mapping states and actions to a density over states, and $T$ is the time horizon. We do not assume access to a reward or cost function intrinsic to the MDP. In this setting $\S$ and $\U$ can be discrete or continuous.

A policy $\pi : \S \mapsto \Pr(\U)$ from the set of allowable policies $\Pi$ is a mapping from states to a density over the set of controls. When a policy is deployed on the MDP, such as by rolling out a robot in the real world, it generates a trajectory $\tau$, which is a tuple of states and actions: $\tau = \left(s_1, u_1, \ldots, s_{T - 1}, u_{T - 1}, s_T  \right)$. Given an initial state distribution $\mu$ over $\S$, the probability density of a trajectory under policy $\pi$ is given by  
%In this article, we will consider only learning time-invariant policies.
%Let $p_0$ denote an initial state distribution over $\S$. 
%Let $s_t \in \mathcal S$ and $u_t \in \mathcal U$ be the state and control in a Markov decision process. For a given state and control, the transition distribution is represented by $p(\cdot | s_t, u_t) : \mathcal S \mapsto \R_{\geq 0}$ with initial state distributoin $p_0 : \mathcal S \mapsto \R_{\geq 0}$. The probability (or density) of a trajectory $\tau$ of length $T$ under a (possibly stochastic) policy $\pi \in \Pi: \mathcal S \mapsto \Pr (\U)$ is given by
\begin{align*}
p(\tau;\pi) = \mu(s_1) \prod_{t = 1}^{T - 1} \pi (u_t | s_t) p(s_{t+1} | s_t, u_t),
\end{align*}
When $\pi$ is deterministic, it is taken to be the Dirac (or Kronecker in the discrete case) delta function. In this paper, we consider learning parametric policies, i.e., there is a convex and compact subset of parameters $\Theta \subset \R^d$ equipped with the $l_2$-norm $\| \cdot \|$ that parameterizes policies $\pi_\theta$ for $\theta \in \Theta$. Its diameter is given by $D\define \max_{\theta_1, \theta_2 \in \Theta} \|\theta_1 - \theta_2\|$. 

\paragraph{Objective.} Rather than assuming access to a cost or reward function, we assume that the robot has access to a supervisor policy $\pi^*$ that is capable of providing control labels to states. Our goal is to find a parameterized policy $\pi_\theta$ that minimizes some loss with respect the supervisor $\pi^*$. The learned policy in general is not able to match $\pi^*$. The loss of a policy $\pi_\theta$ along a trajectory $\tau$ is a non-negative function such that
\begin{align*}
J_\tau(\pi_\theta) \define  %\E_{\pi_\theta} 
\frac{1}{T - 1} \sum_{t = 1}^{T - 1} \ell_t(s_t,\pi_\theta),
%J_\tau(\pi) \define  \E_{\pi,\pi^*} \frac{1}{T} \sum_{t = 1}^{T - 1} \ell(\hat u_t, u_t^*),
\end{align*}
where $\ell_t: \S \times \Theta \mapsto \mathbb R_{\geq 0}$ is a differentiable (in $\theta$) per-time step surrogate loss function. %, $\hat u_t \sim \pi(\cdot | s_t)$ and $u_t^* \sim \pi^*(\cdot | s_t)$. 
%For parameterized policies, we will abuase notation and write $J_\tau(\theta) \define J_\tau(\pi_\theta)$.
%The expectation is taken over trajectories $\tau \sim p(\tau;\pi_\theta)$.
The general optimization problem of imitation learning can be written as
\begin{align*}
\min_{\theta } \quad  \E_{p (\tau; \pi_\theta)} J_\tau( \pi_\theta) \quad \st \quad \theta \in \Theta
\end{align*}
where
\begin{align*}
\E_{p (\tau; \pi_\theta)} J_\tau( \pi_\theta) =  \int p(\tau; \pi_\theta) J_\tau(\pi_\theta) d\tau
\end{align*}

The expectation is taken over the distribution of trajectories that $\pi_\theta$ induces. We can interpret this objective in the following way: trajectories are sampled from the distribution induced by $\pi_\theta$ and then the performance of $\pi_\theta$ is evaluated along those trajectories. We wish to find the parameter $\theta$ in $\Theta$ that gives rise to trajectories that ensure that 
$J_\tau(\theta)$ is low.
%$\theta$ has low loss on those trajectories. 
This problem reflects the goal of having the policy do well on its own induced distribution. However, this is challenging and cannot be solved efficiently \citep{sun2017deeply} since the distribution of trajectories is unknown. It also cannot be exactly solved with regular supervised learning by sampling supervisor trajectories because the sampling distribution is a function of the policy \citep{bagnell2015invitation}.

% \mlnote{not quite true, DAgger is not a fix distribution}
%Because this problem cannot be solved directly, 
\paragraph{Relaxations.} Existing algorithms relax the problem by fixing the trajectory distribution and then optimizing over the evaluation parameter. This decouples the sampling from the supervised learning problem. For example, in standard behavior cloning, one sets the trajectory distribution to the supervisor's trajectory distribution and finds the policy that minimizes loss on that distribution. Formally, we consider the average loss of a parameter $\theta \in \Theta$ over the distribution of trajectories generated by a possibly different policy parameter $\theta' \in \Theta$:
\begin{align}\label{imitation-obj}
f_{\theta'}(\theta)\define \mathbb E_{p (\tau; \pi_{\theta'})}  J_\tau( \pi_{\theta}).
\end{align}
Here, $\theta'$ controls the trajectory distribution and $\theta$ controls the predictions used to the compute the loss on that distribution. For this reason, we refer to the $\theta'$ as the distribution-generating parameter and $\theta$ as the evaluation parameter. We can view $f_{\theta'}(\theta)$ as a bifunction \citep{cheng2019online}.

% \mlnote{the next paragraph feels very dense and too soon, I would say cut it because we give the algorithmic block for each algorithm later}
Optimization problems in this article will be of the form $\min_{\theta \in \Theta} f_{\theta'}( \theta)$ for some fixed and known $\theta' \in \Theta$. Because the trajectory distribution no longer depends on the variable $\theta$, the supervised learning problem can now be feasibly solved by sampling from $p(\tau; \pi_{\theta'})$ which corresponds to rolling out trajectories under the fixed policy $\pi_{\theta'}$. Specifically in this article, we will consider on-policy algorithms which operate over $N \in \mathbb N$ iterations. At any iteration $n$ for $1 \leq n \leq N$, the policy parameter $\theta_n$ is rolled out as the distribution-generating parameter and the loss $f_{\theta_n}(\theta) = \mathbb E_{p (\tau; \pi_{\theta_n})} J_\tau(\pi_\theta)$ is observed, where $\theta$ is the free variable. For convenience, we write $f_n(\theta)\define f_{\theta_n}(\theta)$. These loss functions form the sequence of losses used in the regret metrics $R_S$ and $R_D$.

%\subsection{Online Optimization}

%Online optimization is a field of its own within the broader umbrella of theoretical optimization. For this reason, we outline the basic goals and problem settings.

%\paragraph{Objective.} The goal in online optimization is to design algorithms that generate sequences of parameters $\{ \theta_n\}_{n = 1}^N$ such that the cumulative loss $\sum_{n = 1}^N f_n(\theta_n)$ (is small) in some respect. An algorithm is regarded as performing well if its regret (e.g. $R_S$ or $R_D$) is \textit{sublinear} in $N$, written as $o(N)$ \citep{hazan2016introduction}. That is, it performs at least as well as a static or dynamic comparator on average. 

\subsection{Assumptions}
Finally, we briefly describe the main assumptions of this article. The assumptions are stated formally in Section \ref{guarantees}. As in prior work in both imitation learning and online optimization \citep{ross2010reduction,hazan2016introduction}, we assume strong convexity and smoothness of the loss function in the evaluation parameter $J_\tau(\pi_\theta)$. Intuitively, strong convexity ensures the loss is curved at least quadratically while smoothness guarantees it is not too curved. In practice these assumptions are satisfied when $J_\tau(\pi_\theta)$ is a ridge regression loss, for example.

 As in the work of \cite{cheng2018convergence}, we also assume a regularity condition on the $f_n$ sequence, which bounds the sensitivity of the trajectory distribution in response to changes in the distribution-generating parameter. That is, for a fixed $\theta$, we assume that $\nabla f_{\theta'}(\theta)$ is Lipschitz continuous in $\theta'$. 
 This condition 
% is arguably strict from an online optimization perspective but it 
 captures the basic continuity and structure of the imitation learning problem. We discuss its motivation in Section~\ref{sec:dyn} and examples in Section~\ref{sec:examples}.

In practice, many works have observed good performance of both off-policy and on-policy algorithms using non-convex losses such as neural networks even though these algorithms were often proposed for convex losses. \citep{pomerleau1989alvinn,laskey2017comparing,zhang2018deep,hussein2018deep,pan2017agile}.

\section{On-Policy Algorithms}
We now review three on-policy algorithms that will be the focus of the main theoretical and empirical results of the article.

\begin{figure}
	\begin{algorithm}[H]
		\caption{DAgger \citep{ross2010reduction}}
		\label{alg:ftl}
		\begin{algorithmic}
			\STATE {\bfseries Input:}  Initial policy parameter $\theta_1$,
			\STATE ~~~~~~~~~~ Max iterations $N$.
			\FOR{$n=1$ {\bfseries to} $N-1$}
			\STATE Roll out $\theta_n$ and collect $\tau_n$.
			\STATE Form loss $f_n(\theta) = f_{\theta_n}(\theta)$ from feedback on $\tau_n$.
			\STATE $\theta_{n + 1} \gets \argmin_{\theta \in \Theta} \sum_{m=1}^n f_m(\theta)$.
			\ENDFOR
		\end{algorithmic}
	\end{algorithm}
	\begin{algorithm}[H]
		\caption{Imitation Gradient \citep{cheng2018fast,sun2017deeply}}
		\label{alg:ogd}
		\begin{algorithmic}
			\STATE {\bfseries Input:}  Initial policy parameter $\theta_1$,
			\STATE ~~~~~~~~~~ Max iterations $N$,
			\STATE ~~~~~~~~~~ Stepsize $\eta$.
			\FOR{$n=1$ {\bfseries to} $N-1$}
			\STATE Roll out $\theta_n$ and collect $\tau_n$.
			\STATE Form loss $f_n(\theta) = f_{\theta_n}(\theta)$ from $\tau_n$.
			\STATE  $\theta_{n + 1} \gets P_\Theta \left(\theta_n - \eta \nabla f_n(\theta_n) \right)$.
			\ENDFOR
		\end{algorithmic}
		\end{algorithm}
		\begin{algorithm}[H]
		\caption{Multiple Imitation Gradient \citep{zhang2017improved}}
		\label{alg:omgd}
		\begin{algorithmic}
			\STATE {\bfseries Input:}  Initial policy parameter $\theta_1$,
			\STATE ~~~~~~~~~~ Max iterations $N$,
			\STATE ~~~~~~~~~~ Updates per iteration $K$,
			\STATE ~~~~~~~~~~ Stepsize $\eta$.
			\FOR{$n=1$ {\bfseries to} $N-1$}
			\STATE Roll out $\theta_n$ and collect $\tau_n$.
			\STATE Form loss $f_n(\theta) = f_{\theta_n}(\theta)$ from $\tau_n$.
			\STATE $\theta_n^1 \gets \theta_n$
			\FOR{$k=1$ {\bfseries to} $K$}
			\STATE  $\theta_n^{k + 1} \gets P_\Theta \left(\theta_n^k - \eta \nabla f_n(\theta_n^k) \right)$.
			\ENDFOR
			\STATE $\theta_{n + 1} \gets \theta_n^{K+1}$.
			\ENDFOR
		\end{algorithmic}
	\end{algorithm}
%zhang2017improved
\caption*{Top: DAgger minimizes over all observed loss functions which are represented by a supervised learning loss over all observed data. Middle:
	Imitation Gradient computes a gradient on data collected from only the most recent rollout and takes a single gradient step. $P_\Theta$ is a projection operation, projecting the resulting paramter vector back onto $\Theta$ in the event the gradient updates leaves it outside. Bottom: Multiple Imitation Gradient is a generalization of Imitation Gradient where $K \geq 1$ gradient steps are take with each rollout.
}
\end{figure}

\subsection{DAgger}
DAgger is a derivative of the follow-the-leader algorithm from Online Optimization. For detailed discussion of implementation, we refer the reader to \citep{ross2010reduction,hazan2016introduction}. DAgger proceeds by rolling out the current policy and observing a loss based on the induced trajectory distribution. The next policy parameter is computed by aggregating all observed losses and minimizing over them. An example of this for the $l_2$-regularized linear regression problem would be $\min_\theta \sum_{m = 1}^n f_m(\theta)$, where
\begin{align}\label{ridge-regression}
\sum_{m = 1}^n f_m(\theta) = \sum_{m = 1}^n \mathbb E \|S_m \theta - U_m\|^2 + \frac{\alpha_0}{2} \|\theta\|^2.
\end{align}
Here, $S_m$ is the matrix of state vectors observed from rolling out $\pi_m$ and $U_m$ is the matrix of control labels from the supervisor at the $m$th iteration. The expectation is taken with respect to the randomness in $S_m$ and $U_m$. In the original DAgger algorithm, a user-defined stochastic mixing term was included. We omit this term because setting it to zero only improves the original bound by \cite{ross2010reduction}.

\subsection{Imitation Gradient}
Recently there has be interest in ``Imitation Gradient" algorithms. Algorithms such as AggreVaTeD and LOKI fall in this family. The online gradient descent algorithm from online optimization underlies such algorithms and their variants. The aforementioned methods have explored more general gradient-based algorithms such as those inspired by the natural gradient and mirror descent. The analysis in this article will focus on the basic online gradient descent algorithm.
Online gradient descent proceeds by observing $f_n$ at each iteration and taking a weighted gradient step: $\theta_n - \eta \nabla f_n(\theta_n)$. In the event that the resulting parameter lies outside of $\Theta$, it is projected back on the space with the projection $P_\Theta(\theta) = \argmin_{\theta' \in \Theta} \|\theta' - \theta\|$.
To address the mirror descent case, we would instead define the updates as
\begin{align*}
\theta_{n+1} = \argmin_{\theta \in \Theta} \quad \eta \< \nabla f_n(\theta_n), \theta\> + \D_h(\theta, \theta_n),
\end{align*}
where again $\D_h$ is the Bregman divergence with respect a strictly convex funciton $h$. Taking $h(\theta) = \frac{1}{2}\|\theta\|^2$ recovers the basic imitation gradient algorithm. 

Again, using the $l_2$-regularized regression example, the update would take the form
\begin{align*}
\theta_{n+1} = P_\Theta\left( \theta_n  - 2 \eta \left(  \E \left[S_n^\top (S_n -  U_n ) \right]  + \frac{\alpha_0}{2} \theta_n \right) \right).
\end{align*}
In online optimization (and general convex optimization), the step size is sometimes taken to be variable in order to ensure convergence under different assumptions.

\subsection{ Multiple Imitation Gradient}
We also consider a related algorithm termed Multiple Imitation Gradient, based on the online multiple gradient descent algorithm. This algorithm is identical to the Imitation Gradient, but at each iteration it updates the policy parameters $K$ times, recomputing the gradient each time. Imitation Gradient is a special case of Multiple Imitation Gradient where $K = 1$. %The algorithms are shown together in Algorithm \ref{alg:ogd}.
It was shown by \cite{zhang2017improved} that, for dynamic regret, multiple imitation gradient is asymptotically at least as good as imitation gradient and in some cases it achieves improved rates.

\section{Towards Dynamic Regret in Imitation Learning}\label{sec:dyn}

The imitation learning algorithms presented in the preceding section are derivatives of standard online optimization algorithms. At first glance, the underlying objectives in both fields also seem aligned: find a sequence of policies (parameters) $\{\theta_n\}_{n=1}^N$ such that the cumulative loss $\sum_{n = 1}^N f_n(\theta_n)$ is low in some regard. It is only natural then that they are analyzed in a familiar online optimization framework. An algorithm is regarded as performing well if its regret (e.g. $R_S$ or $R_D$) is \textit{sublinear} in $N$, written as $o(N)$ \citep{hazan2016introduction}. That is, it performs at least as well as a static or dynamic comparator on average. We note that some authors use the term ``no-regret" \citep{ross2010reduction}, which means the same as sublinear regret. No-regret refers to the \textit{average} regret $\frac{1}{N}R_S$ vanishing, which occurs if and only if it is sublinear.

However, we must be careful when applying off-the-shelf results. In this section, we discuss the limitations of the standard static regret analysis from the perpsective of imitation learning and the argue that dynamic regret provides a more informative measure of performance.

%We are interested in showing that the policies generated by these algorithms perform well on the loss on their own induced trajectory distributions and, furthermore, that they converge. This means that we would like the difference $f_n(\theta_n) - \min_{\theta \in \Theta} f_n(\theta)$ to be as small as possible for every iteration $n$. This difference represents the instantaneous regret the algorithm has for playing $\theta_n$ instead of $\theta^*_n$ at iteration $n$ where $\theta^*_n := \argmin_{\theta \in \Theta} f_n(\theta)$. Summing over $n$ from $1$ to $N$, we have the definition of dynamic regret given in Equation (\ref{dynamic_regret}).

\subsection{Limitations of Static Regret}

The more well known static regret, reproduced here as
\begin{align*}
R_S = \sum_{n = 1}^N f_n(\theta_n) - \min_{\theta} \sum_{n = 1}^N f_n(\theta),
\end{align*}
has a rich history of analysis in online optimization. Indeed, this was the first regret definition introduced by \cite{zinkevich2003online}. Algorithms have been proposed and analyzed under varying notions of convexity and smoothness to yield sublinear static regret rates \citep{shalev2009mind,hazan2007logarithmic}. Furthermore, the metric has also been used for analysis of stochastic descent methods in convex optimization \citep{duchi2011adaptive}.

Sublinear static regret implies that the parameters played are at least as good as a single fixed parameter, ensuring that
\begin{align*}
f_N(\theta_N) - \min_{\theta \in \Theta} \frac{1}{N} \sum_{n = 1}^N f_n(\theta)
\end{align*}
is vanishing (or even negative) on average. Achieveing sublinear static regret is important for problems that are online versions of batch learning problems because an algorithm that achieves sublinear static regret is essentially as good as learning if all data was given at once. This validates the use of online learning algorithms for standard statistical learning tasks \citep{rakhlin2014statistical} and mirrors the types of problems one encounters in classical statistics \citep{bubeck2011introduction}. A canonical example is an email spam filter where a classifier is learned over time as samples of spam and true emails are revealed from a static distribution.

In the context of imitation learning, the static regret metric compares an algorithm's sequence of policies to the minimizer over the average of losses on all trajectory distributions seen during training. 
Efficient algorithms exist to achieve static regret for convex and strongly convex losses regardless of any external conditions such as the robot or system dynamics or distributions induced \citep{ross2010reduction}. However, the ability to have low regret compared to the average of the past has little bearing at run time.
 The most we can say about an on-policy algorithm with sublinear dynamic regret is that there exists a policy parameter $\theta$ generated by the algorithm such that,
\begin{align*}
f_\theta(\theta) \leq  \frac{o(N)}{N} + \min_{\theta' \in \Theta}\frac{1}{N}\sum_{n = 1}^N f_n(\theta'),
\end{align*}
where the term $o(N)/N$ is vanishing in $N$. A common custom in static regret analyses for imitation learning is to define $\epsilon_N = \min_{\theta' \in \Theta}\frac{1}{N}\sum_{n = 1}^N f_n(\theta')$ and assume that $\epsilon_N$ is small if the policy class is sufficiently expressive \citep{ke2019imitation}. In practice, this assumption can actually be very strict as pointed out by \cite{laskey2017comparing} and \cite{cheng2018convergence}. It suggests that a policy exists that is simultaneously good on \textit{any} arbitrary collection of $N$ trajectory distributions. That is, for static regret to guarantee a low loss policy, we would need to assume there is nearly zero approximation error (bias) \textit{everywhere} in the state space. Even if such a policy did exist, it was argued by \cite{laskey2018and} that a standard off-policy behavior cloning approach can be more efficient in practice due to increased variance.

%Furthermore, proving that a policy has sublinear regret compared to the average of previously seen trajectory distributions does not give indication of whether the policy actually performs well at run time. 
Thus, the gap $f_\theta(\theta) - \min_{\theta' \in \Theta} f_\theta(\theta')$ could still be very large. The intuitive reason is that some distributions observed during training can be irrelevant to the task due to bad initialization or extremely sensitive dynamics, but they are still included in the average for static regret. 
As a result, static regret can be sublinear regardless of actual policy performance at run time or whether the algorithm leads to convergent policies. Prior work has shown in experiments and theoretical examples that on-policy algorithms can indeed fail on ``hard" problems \citep{cheng2018convergence,laskey2017comparing}, but this is not obvious in static regret theory. Thus, it appears that static regret can be overly optimistic and ambiguous as a theoretical metric for imitation learning unless one assumes that $\epsilon_N$ is arbitrarily small. In order to amend these deficiencies, we turn to dynamic regret.

\subsection{Dynamic Regret}

In online optimization literature, dynamic regret has become increasingly popular to analyze online learning problems where the objective is constantly shifting \citep{hall2015online,hazan2007adaptive,mokhtari2016online,yang2016tracking,zhang2017improved,zinkevich2003online,jadbabaie2015online}. 
In constrast to static regret, sublinear dynamic regret implies that
\begin{align*}
f_N(\theta_N) - \min_{\theta \in \Theta} f_N(\theta)
\end{align*}
is vanishing on average with $N$. Dynamic regret targets the problem statement where it is important achieve low loss on each function observed instantaneously such as portfolio optimization \citep{hazan2007adaptive}. In this setting, one aims to achieve high returns (low loss) at each point in time as the market changes, rather than perform well compared to a fixed benchmark on average. If the market is shifting over time, i.e. prices are changing, we are interested in playing the best strategy for the given state of the market. A related metric, known as adaptive regret, has also been studied. It observes a window of regret rather than instantaneous regret \citep{hazan2007adaptive,adamskiy2012closer}.

To achieve sublinear dynamic regret is difficult, in fact, impossible without further assumptions or knowledge of the underlying environment. For example, suppose the environment plays
\begin{align*}
f_n(\theta) = (\theta - X_n)^2 \quad \text{where} \quad X_n \sim \text{Unif}[-10, 10].
\end{align*} There is no hope that any learner could overcome this volatility to achieve sublinear dynamic regret by simply using past information from observed losses \citep{yang2016tracking,cheng2019online}.

Fundamentally, this difficulty of achieving sublinear dynamic regret is dependent on the change in the loss functions over iterations, often expressed in terms of quantities called variations in the literature. If the loss functions change in an unpredictable or adversarial manner, we can expect large variation terms leading to large regret and suboptimal policies. This is the reason that sublinear dynamic regret bounds cannot be obtained in general using only the assumptions commonly used for static regret \citep{yang2016tracking}. In this article we consider the commonly used path variation and a squared variant of it introduced by \cite{zhang2017improved}.
\begin{definition}[Path Variation and Squared Path Variation]\label{variation}
	For a sequence of optimal parameters from $m$ to $n$ given by  $\theta^*_{m:n}:= \{\theta_i^*\}_{i = m}^n$, the path variation is defined as
	\begin{align*}
	V(\theta^*_{m:n}) \define \sum_{i = m}^{n - 1} \| \theta^*_i - \theta_{i + 1}^*\|,
	%	\quad \text{and} \quad S(\theta^*_{m:n}) := \sum_{i = m}^{n - 1} \| \theta^*_i - \theta_{i + 1}^*\|^2.
	\end{align*}
	and the squared path variation is defined as
	\begin{align*}
	S(\theta^*_{m:n}) \define  \sum_{i = m}^{n - 1} \| \theta^*_i - \theta_{i + 1}^*\|^2.
	\end{align*}
\end{definition}
These measures reflect the variation in the loss functions by computing the distances or the square distances between consecutive optimal parameters. Thus the rate in $N$ of a measure of variation generally influences the rate of the resulting dynamic regret bound and so we can often think of the variation as describing the difficulty of the problem.

Many algorithms have been proposed and analyzed in this regret framework in terms of variation measures of the loss functions. For example, \cite{zinkevich2003online} proved a dynamic regret rate of $O(\sqrt{N}(1 + V(\theta^*_{1:N})))$ for online gradient descent with convex losses. Here, the regret rate is dependent on the rate of the path variation, which might also be a function of $N$. Therein lies the difficulty of achieving low dynamic regret: no matter the algorithm, rates depend on the variation, which can be large for arbitrary sequences of loss functions. 
%Indeed, it was shown by \cite{yang2016tracking} that sublinear dynamic regret is impossible when there is no constraint on the path variation $V(\theta_{m:n}^*)$. 

In the context of imitation learning, dynamic regret compares the $n$th policy to the instantaneous best policy on the $n$th distribution, which means it is a stricter metric than static regret. The advantage of the dynamic regret metric is that a policy's performance at any iteration is always evaluated with respect to the most relevant trajectory distribution: the current one. Proving dynamic regret is low implies that each policy on average is as good as the instantaneous best on its own distribution. Furthermore, we can examine convergence properties and solution characteristics of an algorithm by proving that dynamic regret is low. We formalize this statement in the next section.

% These measures reflect the variation in the loss functions by computing the distances or the square distances between consecutive optimal parameters. Thus the complexity in $N$ of a measure of variation generally influences the complexity of the resulting dynamic regret bound and so we can often think of the variation as describing the difficulty of the problem.
% For strongly convex loss functions considered in this paper, each optimum is unique. It is important to note that despite its name, the squared path variation is not actually the square of the path variation but rather the sum of squares of the lengths comprising the path variation.
%In recent years, the Online Optimization community has seen a surge of interest in dynamic regret problems. 

%Because these Online Optimization algorithms often underlie many on-policy imitation learning algorithms,
We emphasize that the difficulty of achieving sublinear dynamic regret is not a drawback of the dynamic regret metric but rather an important advantage because it reveals whether an imitation learning problem is likely to be solvable by an on-policy algorithm. Achieving low regret implies the algorithm might be successful and find a (near) optimal solution. However, in cases where high dynamic regret is unavoidable, we would be inclined to seek other options.

\subsection{Continuity in the Distribution-Generating Parameter}

Dynamic regret analyses offer a promising framework for theoretical analysis of on-policy imitation learning algorithms but only as long as the variation can be characterized. In imitation learning, there is considerable structure available that is not possible in general-purpose dynamic regret analyses. This structure prevents pathological cases such as the ones discussed previously. We discuss this notion at a high level to provide intuition.

Recall the imitation learning objective function:
\begin{align*}
f_{\theta'} (\theta) = \E_{\tau \sim p(\tau ; \theta')}  J_\tau(\theta).
\end{align*}
Here we abuse notation slightly, writing $p(\tau ; \theta) = p(\tau ; \pi_\theta)$ and $J_{\tau}(\theta) = J_{\tau}(\pi_\theta)$. We first consider an impractical but illustrative example where an on-policy algorithm does not even update the robot's policy. That is,
\begin{align*}
\theta_1 = \theta_2 = \ldots = \theta_N.
\end{align*}
This is a poor strategy, but interestingly the loss functions are identical at each round because the distribution-generating parameter is always the same. This is due to the bifunction form of the objective:
\begin{align*}
f_{\theta_1}(\theta) = \ldots = f_{\theta_N}(\theta)  \quad \forall \theta \in \Theta.
\end{align*}
Thus, the variation is always zero. Still the regret could be large if $ \theta_1$ is not already optimal on its own distribution, but this gives us hope that the variation can be characterized for certain algorithms exploiting this structure.

Now suppose in a simplified example that we perturb $\theta_1$ by a small amount to generate $\theta_2$. We expect the robot to apply similar but different controls than before, but in the worst case these different controls could lead to entirely new trajectories if the dynamics are highly irregular. That is $p(\tau; \theta_1)$ could be very different from $p(\tau; \theta_2)$. The cost $J_\tau(\theta)$ would then be evaluated on vastly different trajectories. In this case, due to the irregularity of the dynamics, the costs at each round of an imitation learning algorithm could be just as difficult as the adversarial case in general-purpose dynamic regret analyses, thus leading to unavoidably high dynamic regret. 

Alternatively, the system may be sufficiently well-behaved that similar controls give rise to similar trajectories. This notion resembles continuity of the trajectory distribution in the distribution-generating parameter: to ensure $\E_{p(\tau; \theta_1)} J(\theta)$ is necessarily close to $\E_{p(\tau; \theta_2)}J_\theta(\theta)$ we should choose $\theta_2$ to be sufficiently close to $\theta_1$. This intuition suggests we should restrict our attention to well-behaved systems that exhibit this continuity to avoid the pathological scenarios discussed previously. Indeed, these well-behaved systems are exactly what one might expect to see in many robotics applications. Such a condition is not unlike differentiability assumptions in reinforcement learning literature \citep{schulman2015trust} or continuity assumptions on dynamics studied in nonlinear systems \citep{khalil2002nonlinear}.  In Section~\ref{sec:examples}, we explore precedents and examples in detail.

Formally, we describe this notion via Lipschitz continuity. We consider systems that satisfy the following condition for some $\beta \geq 0$:
\begin{align}\label{eq:regularity}
\| \nabla f_{\theta_1} (\theta) - \nabla f_{\theta_2} (\theta) \| \leq \beta \| \theta_1 - \theta_2\| \quad \forall \theta \in \Theta
\end{align}
We refer to this condition as distribution continuity. 
%It can be thought of as a sensitivity constraint of the distribution-generating parameter. 
This condition was first introduced for the imitation learning problem by \cite{cheng2018convergence}. If $\beta = 0$ and $J_\tau(\theta) \neq 0$, this would correspond to a dynamics model where controls have no influence. 
We emphasize that the difference between the terms on the left-hand side lies entirely in the effect of  $p(\tau; \theta_1)$ and $p(\tau; \theta_2)$ being different in the objective. The constant $\beta$ can be interpreted as measuring the sensitivity of the loss to changes in the distribution-generating parameter.
Continuity in the distribution-generating parameter might also be expressed by directly assuming Lipschitz continuity of $p(\tau; \theta)$ but it turns out that this is as actually a stronger assumption because it implies (\ref{eq:regularity}),
assuming that $\sup_\tau \|\nabla J_\tau(\theta)\|$ is finite for all $\theta \in \Theta$. We discuss this in detail in Section~\ref{sec:examples}.

%the variation of the loss functions is related to the amount of change in the trajectory distributions induced by the sequence of policies. Ultimately these changes in trajectory distributions are dependent on the dynamics of the system. This can be seen by observing the bifunction form of the objective: $f_{\theta'}(\theta)$. Understanding the behavior of $f_{\theta'} (\theta)$ in terms of the distribution-generating parmaeter $\theta'$ can aid in characterizing the variation in order to obtain sublinear dynamic regret.

Continuity in the distribution-generating parameter is a characteristic of the system, which the user does not get to control explicitly. In contrast, the other assumptions made in this paper are largely up to the control of the user by imposing different choices of scalings and loss functions.

%We will show in the next section that a single assumption on the dynamics, introduced by Cheng and Boots \cite{cheng2018convergence}, can aid in characterizing the variation.

% In the spirit of \cite{cheng2018convergence} we can prove convergence or, at the very least, identify sufficient conditions for convergence using some knowledge of the sensitivity of the trajectory distribution. The next section will use these notions to show dynamic regret bounds and conditions for convergence for \textsc{Dagger}, imitation gradient and multiple imitation gradient. We will see that in some cases, depending on the choice of the measure of variation, conversions from known general dynamic regret rates to specific imitation learning convergence analyses can be straightforward.

\section{Main Results}\label{guarantees}

This section presents the primary theoretical results of the paper  for the infinite sample or deterministic case as in \citep{cheng2018convergence,ross2010reduction}. We aim to achieve the following:
\begin{enumerate} 
	\item To make rigorous the statements in the preceding section, showing that an on-policy algorithm's dynamic regret reveals important theoretical properties regarding convergence.
	\item To show that convergence and sublinear dynamic regret can be guaranteed for Algorithms~\ref{alg:ftl}-\ref{alg:omgd} under certain conditions on the sensitivity of the trajectory distribution.
\end{enumerate}
In particular, we show that it is possible for all algorithms to achieve $\frac{1}{N}R_D  = O(N^{-1})$ average dynamic regret (i.e. sublinear). Furthermore, because dynamic regret upper bounds static regret, these results suggest that we also improve static regret rates of \cite{ross2010reduction} by a logarithmic factor.

We first formally state the assumptions introduced in Section \ref{prelims}. We begin with common convex optimization assumptions on the loss in the evaluation parameter. Note that all gradients of $f$ in this section are taken with respect to the evaluation parameter, i.e. $\nabla f_{\theta'}(\theta)$ is the gradient with respect to $\theta$, not $\theta'$.

\begin{assumption}[Strong Convexity]\label{convexity}
	For all $\theta_1, \theta_2, \theta \in \Theta$, $\exists \alpha > 0$ such that
	\begin{align*}
	f_\theta(\theta_2) \geq f_\theta(\theta_1) + \langle \nabla f_\theta( \theta_1), \theta_2 - \theta_1 \rangle + \frac{\alpha}{2}\|\theta_1 - \theta_2\|^2.
	\end{align*}
\end{assumption}
\begin{assumption}[Smoothness and Bounded Gradient] \label{smoothness} 
	For all $\theta_1, \theta_2, \theta \in \Theta$, $\exists \gamma > 0$ such that
	\begin{align*}
	\|\nabla f_\theta (\theta_1) - \nabla f_\theta ( \theta_2) \| \leq \gamma \| \theta_1 - \theta_2\|
	\end{align*}
%	and $\exists G > 0$ such that $\forall \tau$, $\| \nabla J_\tau(\theta_1)\| < G$, which implies $\|\nabla f_\theta (\theta_1)\| \leq G$.
	and $\exists G > 0$ such that $\|\nabla f_\theta (\theta_1)\| \leq G$.
\end{assumption}
\begin{assumption}[Stationary Optimum]\label{relint}
	For all $\theta' \in \Theta$, $\theta^*$ is in the relative interior of $\Theta$ where $\theta^* = \argmin_{\theta \in \Theta} f_{\theta'}(\theta)$. That is, $\nabla f_{\theta'}(\theta^*) = 0$.
\end{assumption}
In practice the above conditions are not difficult to satisfy. For example, running DAgger with $l_2$-regularized linear regression, i.e. ridge regression, would simultaneously satisfy all three. Finally, we restate the regularity condition on the loss as a function of the distribution-generating parameter discussed in the previous section, which we can interpret as a measure of sensitivity of the loss.
\begin{assumption}[Distribution Continuity]\label{regularity}
	For all $\theta_1, \theta_2, \theta \in \Theta$, $\exists \beta> 0$ such that
	\begin{align*}
	\|\nabla f_{\theta_1}(\theta) - \nabla f_{\theta_2}(\theta)\| \leq \beta \|\theta_1 - \theta_2\|.
	\end{align*}
\end{assumption}
This assumption is a Lipschitz regularity on the gradients of the loss, but it is distinct from Assumption~\ref{smoothness} in that the Lipschitz continuity is in the distribution-generating parmeter, not the evaluation parameter. This is distinct from Assumption~\ref{smoothness}, where continuity is in the evaluation parameter. The first three assumptions describe conditions imposed on the actual loss function $J_\tau(\theta)$; Assumption~\ref{regularity} describes a constraint on the system dynamics by virtue of the trajectory distribution $p(\tau ; \theta)$.

%It is introduced as a form a prior knowledge of the dynamics as essentially a sensitivity constraint that implies we are assuming that small changes in the policy parameters guarantee small changes in the induced trajectory distributions. Such an assumption is not unlike assumptions that guarantee continuity of initial conditions studied in nonlinear systems \citep{khalil2002nonlinear}. In this analysis, we use Assumption \ref{regularity} because of its precedent in prior work \citep{cheng2018convergence}. We discuss this regularity condition in the next subsection and explore examples and precedents in  Section~\ref{sec:examples}.

\subsection{Solution Analysis under Dynamic Regret}
Let $\theta_n^* =\argmin_{\theta \in \Theta} f_n(\theta)$ be the optimal parameter at iteration $n$. This minimizer is unique because $f_n$ is strongly convex, but it is important to distinguish that this is not the same as the solution to the true objective $\min_{\theta \in \Theta} f_{\theta}(\theta)$. We begin with a result concerning a stability constant $\lambda := \frac{\beta}{\alpha}$. $\lambda$ represents the ratio of the distribution continuity and the strong convexity.

\begin{proposition}\label{runaway}
	Let $\theta, \hat \theta \in \Theta$ be two policy parameters inducing loss functions $f_{\theta}$ and $f_{\hat \theta}$. Let $\theta^*$ and $\hat \theta^*$ be the corresponding optimal policies on $f_\theta$ and $f_{\hat \theta}$. The following inequality holds:
	\begin{align*}
	\|\theta^* - \hat \theta^* \| \leq \lambda \|\theta - \hat \theta\|.
	\end{align*}
\end{proposition}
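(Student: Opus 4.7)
The plan is to prove this sensitivity result by combining the first-order optimality conditions (Assumption \ref{relint}) with strong convexity (Assumption \ref{convexity}) on one side and distribution continuity (Assumption \ref{regularity}) on the other. The key idea is that changing the distribution-generating parameter from $\theta$ to $\hat{\theta}$ perturbs the gradient field of $f_\theta$ by at most $\beta \|\theta - \hat{\theta}\|$, while strong convexity forces the minimizer to move by no more than a $1/\alpha$ multiple of any such gradient perturbation.

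First I would use Assumption \ref{relint} to write $\nabla f_\theta(\theta^*) = 0$ and $\nabla f_{\hat{\theta}}(\hat{\theta}^*) = 0$. Next, I would apply strong convexity of $f_\theta$ in its evaluation argument to the pair $(\theta^*, \hat{\theta}^*)$: adding the two one-sided strong convexity inequalities yields the standard co-coercivity bound
\begin{equation*}
\langle \nabla f_\theta(\hat{\theta}^*) - \nabla f_\theta(\theta^*), \hat{\theta}^* - \theta^*\rangle \geq \alpha \|\hat{\theta}^* - \theta^*\|^2,
\end{equation*}
and then Cauchy--Schwarz gives $\|\nabla f_\theta(\hat{\theta}^*) - \nabla f_\theta(\theta^*)\| \geq \alpha \|\hat{\theta}^* - \theta^*\|$. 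Since $\nabla f_\theta(\theta^*) = 0$, this simplifies to $\|\nabla f_\theta(\hat{\theta}^*)\| \geq \alpha \|\hat{\theta}^* - \theta^*\|$.

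For the upper bound, I would add and subtract $\nabla f_{\hat{\theta}}(\hat{\theta}^*)$, which vanishes by optimality, to rewrite
\begin{equation*}
\nabla f_\theta(\hat{\theta}^*) = \nabla f_\theta(\hat{\theta}^*) - \nabla f_{\hat{\theta}}(\hat{\theta}^*),
\end{equation*}
and then invoke Assumption \ref{regularity} at the common evaluation point $\hat{\theta}^*$ to conclude $\|\nabla f_\theta(\hat{\theta}^*)\| \leq \beta \|\theta - \hat{\theta}\|$. Chaining the two bounds and dividing by $\alpha$ gives the claim with $\lambda = \beta/\alpha$.

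The argument is essentially routine once the ingredients are in place, so the only subtlety to watch out for is bookkeeping on which slot of the bifunction the Lipschitz and convexity estimates apply to: strong convexity must be used in the \emph{evaluation} argument of $f_\theta$ (with fixed distribution-generating parameter $\theta$), while the distribution-continuity bound must be applied at the \emph{fixed} evaluation point $\hat{\theta}^*$ with the two distribution-generating parameters $\theta$ and $\hat{\theta}$. No assumption about $\theta^*, \hat{\theta}^*$ lying in the interior is needed beyond what Assumption \ref{relint} already provides to guarantee vanishing gradients.
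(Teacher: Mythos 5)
Your proof is correct and follows essentially the same route as the paper's: strong convexity applied twice to the pair of minimizers (equivalently, the co-coercivity bound), followed by Cauchy--Schwarz and the distribution-continuity estimate at the common evaluation point $\hat \theta^*$. The only cosmetic difference is that you invoke Assumption \ref{relint} to make both gradients vanish outright, whereas the paper uses the variational-inequality form of the first-order optimality conditions and therefore does not need the minimizers to lie in the interior of $\Theta$.
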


\begin{proof}
	Since $f_\theta$ is differentiable in the evaluation parameter, the first-order necessary optimality condition implies
	\begin{align*}
	\<\nabla f_\theta(\theta^*), \hat \theta^* - \theta^*\> \geq 0,
	\end{align*}
	and the same is true with $\theta$ and $\hat \theta$ switched. This, combined with strong convexity of $f_\theta$ implies
	\begin{align*}
	\frac{\alpha}{2}\|\theta^* - \hat \theta^*\|^2 \leq f_\theta(\hat \theta^*) -  f_\theta(\theta^*) .
	\end{align*}
	Applying strong convexity of $f_\theta$ a second time and the first-order optimality condition of $f_{\hat \theta}$ gives:
	\begin{align*}
	\alpha \|\theta^* - \hat \theta^*\|^2 &  \leq \< \nabla f_\theta(\hat \theta^*), \hat \theta - \theta\> \\
	& \leq \< \nabla f_\theta(\hat \theta^*)  - \nabla f_{\hat \theta}(\hat \theta^*) , \hat \theta - \theta\>.
	\end{align*}
	Bounding the right-hand side from above with the Cauchy-Schwarz inequality and rearranging terms, we have
	\begin{align*}
	\|\theta^* - \hat \theta^*\| \leq \frac{1}{\alpha} \|  \nabla f_\theta(\hat \theta^*)  - \nabla f_{\hat \theta}(\hat \theta^*) \|  \leq \frac{\beta }{\alpha} \| \theta - \hat \theta\|,
	\end{align*}
	where the last inequality follows from Assumption \ref{regularity}.
\end{proof}

%The proof is in the appendix.
There are a couple telling conclusions of this proposition. The first becomes clear when we take the parameters to be consecutive iterates. Let $\theta = \theta_n$ and $\hat \theta = \theta_{n+1}$ where $\{ \theta_n\}_{n=1}^N$ is a sequence of iterates from an arbitrary on-policy algorithms. The proposition suggests that when $\lambda < 1$, we know with certainty that $\|\theta_{n +1}^* - \theta_n^* \| < \|\theta_{n+1} -\theta_n\|.$ In other words, the optimal parameters cannot run away faster than the algorithm's iterates. We can even view this problem as a pursuit-evasion game, where, given Proposition~\ref{runaway}, there is hope that we can design an algorithm that catches the optimal parameter. This intuition is also consistent with the findings of prior work \citep{cheng2018convergence}, which shows that convergence of the $N$th policy can be guaranteed when $\lambda < 1$ for DAgger. 

Second, we can consider the self-mapping function $F:\Theta \mapsto \Theta$ defined by
$F(\theta) = \argmin_{\hat \theta \in \Theta} f_{\theta}(\hat \theta)$. If $\lambda < 1$, by Proposition~\ref{runaway}, $F$ is a contraction mapping on $\Theta$. Furthermore since $\Theta$ is compact, $F$ has a unique stationary point which is attainable by fixed-point iteration  \citep{banach1922operations} . This conclusion naturally leads to the following result.
\begin{corollary}\label{uniqueness}
	If $\lambda < 1$, then there exists a unique stationary point $\theta^\star \in \Theta$ such that the policy $\pi_{\theta^\star}$ satisfies
	\begin{align*}
	\mathbb E_{p (\tau; \pi_{\theta^\star})}  J_\tau( \pi_{\theta^\star}) = \min_{\theta \in \Theta} \mathbb E_{p (\tau; \pi_{\theta^\star})}  J_\tau( \pi_{\theta}).
	\end{align*}
\end{corollary}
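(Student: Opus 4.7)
The plan is to prove this as a direct application of the Banach fixed-point theorem to the self-mapping $F:\Theta \mapsto \Theta$ defined by $F(\theta) = \argmin_{\hat{\theta} \in \Theta} f_\theta(\hat\theta)$, exactly as hinted in the paragraph preceding the corollary. The statement $f_{\theta^\star}(\theta^\star) = \min_{\theta \in \Theta} f_{\theta^\star}(\theta)$ is precisely the condition $F(\theta^\star) = \theta^\star$, so existence and uniqueness of the stationary policy reduces to existence and uniqueness of a fixed point of $F$.

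First I would verify that $F$ is well-defined as a map $\Theta \to \Theta$: by Assumption~\ref{convexity}, $f_\theta$ is strongly convex in its evaluation argument, so the minimizer over the compact convex set $\Theta$ exists and is unique, hence $F(\theta)$ is a singleton in $\Theta$ for every $\theta \in \Theta$. Next I would invoke Proposition~\ref{runaway} with the identification $\theta^* = F(\theta)$ and $\hat\theta^* = F(\hat\theta)$, which immediately yields
\begin{equation*}
\|F(\theta) - F(\hat\theta)\| \leq \lambda \|\theta - \hat\theta\| \quad \forall \theta, \hat\theta \in \Theta.
\end{equation*}
Under the hypothesis $\lambda < 1$, this is precisely the statement that $F$ is a strict contraction on $\Theta$.

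Finally, I would note that $\Theta$, being a closed bounded convex subset of $\R^d$ (compact by the standing assumption in Section~\ref{prelims}), is a nonempty complete metric space under the $\ell_2$-norm. The Banach fixed-point theorem \citep{banach1922operations} then guarantees a unique $\theta^\star \in \Theta$ with $F(\theta^\star) = \theta^\star$. Unpacking the definition of $F$ and translating back to the original expectation notation recovers the claimed equality.

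There is no real obstacle here: the heavy lifting has already been done in Proposition~\ref{runaway}, which supplied the contraction constant, and the only substantive ingredients are strong convexity of $f_\theta$ (to make $F$ single-valued), completeness of $\Theta$ (from compactness), and the contraction constant $\lambda < 1$. The only small care point is to be explicit that ``stationary'' in the statement means a fixed point of $F$ and does not require $\theta^\star$ to lie in the relative interior of $\Theta$ beyond what Assumption~\ref{relint} already provides; since that assumption places every $\argmin_\theta f_{\theta'}(\theta)$ in $\relint \Theta$, the fixed point $\theta^\star$ inherits this property automatically and the gradient condition $\nabla f_{\theta^\star}(\theta^\star) = 0$ also follows.
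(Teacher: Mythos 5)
Your proposal is correct and follows essentially the same route as the paper: the paper's proof also applies the Banach fixed-point theorem to the contraction $F(\theta) = \argmin_{\hat\theta \in \Theta} f_\theta(\hat\theta)$, whose contraction constant $\lambda < 1$ comes from Proposition~\ref{runaway}. Your additional remarks on well-definedness of $F$ and completeness of $\Theta$ are correct elaborations of details the paper leaves implicit.
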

\begin{proof}
By the Banach fixed-point theorem for the contraction mapping $F(\theta)$, there is a unique stationary point $\theta^\star \in \Theta$ satisfying $\theta^\star = F(\theta^\star)$. By the definition of $F$, $\theta^\star$ is the minimizer of $\{ \mathbb E_{p (\tau; \pi_{\theta^\star})}  J_\tau( \pi_{\theta}) \ : \ \theta \in \Theta \}$.
\end{proof}

This result is somewhat surprising. It implies that we cannot get stuck in other stationary points that may be worse than some globally optimal stationary point if $\lambda < 1$. Rather there is a single stationary point in $\Theta$ that performs optimally on its own distribution. We can also see that popular on-policy algorithms will preserve this stationary point. 

\begin{proposition}
	Suppose $\lambda < 1$. Let $\theta^\star$ be a stationary point in $\Theta$.
	For $n$ between $1$ and $N$, if $\theta_n = \theta^\star$, then $\theta_m = \theta^\star$ for all $n \leq m \leq N$ for Algorithms~\ref{alg:ftl}-\ref{alg:omgd}.
\end{proposition}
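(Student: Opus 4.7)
The plan is to induct on $m$ from $n$ to $N$, with the base case $m = n$ holding by hypothesis. The key enabling fact is that the fixed point $\theta^\star$ of the self-map $F$ from Corollary~\ref{uniqueness} satisfies $\theta^\star = \argmin_{\theta\in\Theta} f_{\theta^\star}(\theta)$, so by Assumption~\ref{relint} it lies in the relative interior of $\Theta$ and obeys $\nabla f_{\theta^\star}(\theta^\star) = 0$. I will then show that each algorithm's update map preserves $\theta^\star$ as a fixed point under the inductive hypothesis $\theta_m = \theta^\star$.

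For Imitation Gradient (Algorithm~\ref{alg:ogd}), the inductive step is immediate: $\theta_m = \theta^\star$ gives $f_m = f_{\theta^\star}$, so
\begin{equation*}
\theta_{m+1} = P_\Theta\bigl(\theta^\star - \eta\,\nabla f_{\theta^\star}(\theta^\star)\bigr) = P_\Theta(\theta^\star) = \theta^\star,
\end{equation*}
using $\theta^\star \in \Theta$. For Multiple Imitation Gradient (Algorithm~\ref{alg:omgd}), the same logic goes through in a nested induction on $k = 1, \dots, K+1$, since every inner gradient step preserves $\theta^\star$, yielding $\theta_{m+1} = \theta_m^{K+1} = \theta^\star$.

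DAgger (Algorithm~\ref{alg:ftl}) is the case requiring the most care, since its next iterate aggregates all previously observed losses rather than only the current one. The plan is to verify that $\theta^\star$ satisfies the first-order optimality condition for the aggregate $\sum_{k=1}^m f_k$. The inductive hypothesis $\theta_m = \theta^\star$ together with the DAgger update rule that produced $\theta_m$ implies that $\theta^\star$ is already the minimizer of $\sum_{k=1}^{m-1} f_k$; since $\theta^\star$ lies in the relative interior of $\Theta$, this gives $\sum_{k=1}^{m-1} \nabla f_k(\theta^\star) = 0$ (the empty-sum case $n = m = 1$ is trivial). Because $f_m = f_{\theta^\star}$, one also has $\nabla f_m(\theta^\star) = 0$, so the first-order condition for the new aggregate is satisfied at $\theta^\star$, and strong convexity of the aggregate inherited from Assumption~\ref{convexity} then identifies $\theta_{m+1} = \theta^\star$ uniquely.

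The main obstacle is the DAgger case, specifically propagating the interior first-order condition forward through the growing aggregate of past losses; the two gradient-based algorithms reduce essentially to observing that a vanishing gradient makes the projection step trivial.
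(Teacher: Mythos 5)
Your proof is correct and follows essentially the same route as the paper's: an induction showing that each algorithm's update map fixes $\theta^\star$, with the gradient methods handled by noting the update step is trivial at $\theta^\star$ and DAgger handled via optimality of $\theta^\star$ for the growing aggregate. The only real difference is in the DAgger step, where you route through first-order conditions and Assumption~\ref{relint}, whereas the paper uses the more elementary observation that a point simultaneously minimizing $\sum_{k=1}^{m-1} f_k$ and $f_m = f_{\theta^\star}$ must minimize their sum, so no gradient or interiority argument is needed there; for Imitation Gradient the paper likewise uses only the variational-inequality form $\langle \nabla f_{\theta^\star}(\theta^\star), \theta - \theta^\star\rangle \geq 0$ rather than $\nabla f_{\theta^\star}(\theta^\star) = 0$.
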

\begin{proof}
For DAgger (Algorithm \ref{alg:ftl}), if $\theta_n = \theta^\star$, then 
\begin{align*}
\theta^\star = \argmin_{\theta \in \Theta} \sum_{m = 1}^{n - 1} f_m(\theta).
\end{align*}By definition $\theta^\star = \argmin_{\theta \in \Theta} f_{\theta^\star}(\theta)$. Therefore $\theta^\star = \argmin_{\theta \in \Theta} \sum_{m = 1}^{n} f_m(\theta)$ and so $ \theta^\star$ is a stationary point for DAgger.

For Imitation Gradient (Algorithm \ref{alg:ogd}), if $\theta_n = \theta^\star$, then 
\begin{align*}
\theta_{n+1} = \argmin_{\theta \in \Theta}\quad \< \nabla f_{\theta^\star}(\theta^\star), \theta - \theta^\star\>  + \frac{1}{2}\| \theta - \theta^\star\|^2. 
\end{align*}
Because $\theta^\star$ minimizes $f_{\theta^\star}$, we have  $\< \nabla f_{\theta^\star}(\theta^\star), \theta - \theta^\star\> \geq 0$ by the first order optimality condition. Thus, the minimum, which is zero, is achieved at $\theta_{n+1} = \theta^\star$ and so $\theta^\star$ is stationary for this class of gradient-based algorithms. The same holds for Multiple Imitation Gradient (Algorithm \ref{alg:omgd}). Note that this is also true for mirror descent algorithms by substituting in the general Bregman divergence. 
\end{proof}
We emphasize that this notion of stationary points is an inherent property of on-policy algorithms and the imitation learning problem. It is not depednent on any modifications to the algorithms from what was introduced in the original works. However, as discussed in the previous section, convergence of static regret fails to reveal these properties.
We now illustrate the power of the dynamic regret analysis, showing that by achieving sublinear dynamic, we can converge to the stationary point and vice-versa.

\begin{theorem}
	Let $\{\theta_n\}_{n = 1}^N$ be a sequence of policy parameters generated by an arbitrary on-policy algorithm. Suppose that $\lambda < 1$. If $R_D(\theta_1, \ldots, \theta_N)$ is sublinear, there exists a subseqeunce of $\{\theta_n\}_{n = 1}^N$ that converges to the stationary point $\theta^\star$. Conversely, if $\{\theta_n\}_{n = 1}^N$ converges to $\theta^\star$, then $R_D(\theta_1, \ldots, \theta_N)$ is sublinear.

%	$R_D(\theta_1, \ldots, \theta_N)$ is sublienar, there exists a subseqeunce of $\{\theta_n\}_{n = 1}^N$ that converges to the stationary point $\theta^\star$.
	
%	Conversely, $\{\theta_n\}_{n = 1}^N$ converges to $\theta^\star$, then $R_D(\theta_1, \ldots, \theta_N)$ is sublinear.
\end{theorem}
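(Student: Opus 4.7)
The statement links dynamic regret to iterate-level convergence, and the connective tissue is the self-map $F(\theta) \define \argmin_{\hat\theta \in \Theta} f_\theta(\hat\theta)$ from Corollary 1. By Proposition 1, under $\lambda < 1$, $F$ is a $\lambda$-contraction, and $\theta^\star$ is its unique fixed point. My plan for both directions is to first convert the instantaneous regret $f_n(\theta_n) - f_n(\theta_n^*)$ into a squared distance $\|\theta_n - \theta_n^*\|^2$ using strong convexity (for a lower bound) or smoothness (for an upper bound) together with the stationary optimum condition $\nabla f_n(\theta_n^*) = 0$ from Assumption 3, and then use the contraction to relate $\|\theta_n - \theta_n^*\| = \|\theta_n - F(\theta_n)\|$ to $\|\theta_n - \theta^\star\|$.

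\textbf{Forward direction.} Strong convexity (Assumption 1) with $\nabla f_n(\theta_n^*) = 0$ gives $f_n(\theta_n) - f_n(\theta_n^*) \geq \tfrac{\alpha}{2}\|\theta_n - \theta_n^*\|^2$. Dividing $R_D$ by $N$ and invoking sublinearity then yields $\tfrac{1}{N}\sum_{n=1}^N \|\theta_n - \theta_n^*\|^2 \to 0$. Non-negativity of the summands forces $\liminf_n \|\theta_n - \theta_n^*\| = 0$, so there is a subsequence $\{\theta_{n_k}\}$ along which $\|\theta_{n_k} - \theta_{n_k}^*\| \to 0$. The crucial step is a triangle inequality exploiting $F(\theta^\star) = \theta^\star$ and the $\lambda$-contraction:
\begin{equation*}
\|\theta_{n_k} - \theta^\star\| \leq \|\theta_{n_k} - F(\theta_{n_k})\| + \|F(\theta_{n_k}) - F(\theta^\star)\| \leq \|\theta_{n_k} - \theta_{n_k}^*\| + \lambda \|\theta_{n_k} - \theta^\star\|,
\end{equation*}
which rearranges to $\|\theta_{n_k} - \theta^\star\| \leq (1-\lambda)^{-1}\|\theta_{n_k} - \theta_{n_k}^*\| \to 0$.

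\textbf{Reverse direction and main obstacle.} Assuming $\theta_n \to \theta^\star$, smoothness (Assumption 2) with $\nabla f_n(\theta_n^*) = 0$ gives $f_n(\theta_n) - f_n(\theta_n^*) \leq \tfrac{\gamma}{2}\|\theta_n - \theta_n^*\|^2$, and a symmetric triangle-inequality argument bounds $\|\theta_n - \theta_n^*\| = \|\theta_n - F(\theta_n)\| \leq (1+\lambda)\|\theta_n - \theta^\star\|$. Hence the per-round regret vanishes and the Ces\`aro averaging lemma delivers $R_D = o(N)$. The only subtlety, and what I consider the main obstacle, lies in the forward direction: sublinearity of $R_D$ does not directly imply pointwise convergence $\|\theta_n - \theta_n^*\| \to 0$ (an on-policy algorithm could alternate between low- and high-regret rounds), so only a subsequence is guaranteed, which is precisely why the theorem is formulated in terms of a subsequence. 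Once this subsequence is extracted, the $\lambda$-contraction pins it to the unique fixed point $\theta^\star$.
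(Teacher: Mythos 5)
Your proof is correct, but it takes a genuinely different route from the paper's. The paper introduces $v(\theta) \define f_\theta(\theta) - \min_{\theta'} f_\theta(\theta')$ as a Lyapunov-style merit function: the converse direction follows from continuity of $v$ at $\theta^\star$ plus Ces\`aro averaging, and the forward direction extracts a subsequence with $v(\theta_{n_k}) \to 0$ and then uses a topological level-set argument ($\Omega_\delta \subset B_\epsilon(\theta^\star)$, valid because $v$ is continuous on the compact $\Theta$ with unique zero at $\theta^\star$) to conclude convergence. You instead work entirely at the level of distances: strong convexity and smoothness (with Assumption 3 giving $\nabla f_n(\theta_n^*) = 0$) sandwich the instantaneous regret between $\tfrac{\alpha}{2}\|\theta_n - \theta_n^*\|^2$ and $\tfrac{\gamma}{2}\|\theta_n - \theta_n^*\|^2$, and the $\lambda$-contraction of $F$ from Proposition 1 yields the explicit two-sided comparison $\|\theta_n - \theta^\star\| \leq (1-\lambda)^{-1}\|\theta_n - \theta_n^*\|$ and $\|\theta_n - \theta_n^*\| \leq (1+\lambda)\|\theta_n - \theta^\star\|$. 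What your approach buys is quantitative control: it replaces the paper's soft continuity/level-set step with explicit constants depending on $\alpha$, $\gamma$, $\lambda$, so one could in principle read off a rate for $\|\theta_{n_k} - \theta^\star\|$ from the regret. The cost is that your converse direction invokes smoothness (Assumption 2), whereas the paper's argument needs only continuity of $v$; both assumptions are in force here, so this is immaterial. Your identification of the main obstacle --- that sublinearity only forces $\liminf_n \|\theta_n - \theta_n^*\| = 0$, hence only subsequential convergence --- matches exactly why the theorem is stated for a subsequence.
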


\begin{proof}
	Because $\lambda < 1$, we know the stationary point exists and is unique.  Define $v(\theta) \define f_{\theta}(\theta) - \min_{\theta '} f_{\theta}(\theta')$ as a candidate potential  function. Note that $v(\theta) \geq 0$ for all $\theta \in \Theta$ and $v(\theta) = 0$ if and only if $\theta = \theta^\star$. 
	
	We first prove the latter direction. Suppose that $\theta_N \stackrel{N}{\rightarrow} \theta^\star$. Note that $v(\theta)$ is continuous at $\theta^\star$. Therefore, because $\lim_{N\rightarrow \infty} \theta_N = \theta^\star$, we know  $\lim_{N\rightarrow \infty} v(\theta_N) = v(\theta^\star) = 0$. This implies the desired result:
	\begin{align*}
	\lim_{N \rightarrow \infty} R_D(\theta_1, \ldots, \theta_N)  = \lim_{N \rightarrow \infty} \sum_{n = 1}^N v(\theta_n) = o(N).
	\end{align*}
	To prove the other direction, we are given that the sequence $\{v(\theta_n )\}_{n = 1}^N$ is sublinear in dynamic regret. This implies that there is a subsequence $\{v(\theta_{n_k})\}_{k = 1}^K$ such that  $\lim_{k \rightarrow \infty} v(\theta_{n_k}) = 0$ (e.g. take any monotonically decreasing subsequence).
	 Define the ball of radius $\epsilon$ centered at $\theta^\star$ as
	\begin{align*}
	B_\epsilon(\theta^\star) = \{ \theta \in \Theta \ : \ \|\theta - \theta^\star\| \leq \epsilon \}
	\end{align*}
	 Because $v$ is continuous at $\theta^\star$, for every $\epsilon > 0$, we can choose $\delta$ and the corresponding compact level set
	\begin{align*}
	\Omega_\delta \define  \left\{ \theta \in \Theta \ : \ v(\theta) \leq \delta \right\},
	\end{align*}
	which satisfies  $\Omega_\delta \subset B_\epsilon(\theta^\star)$. The level set can be made invariant by taking $K$ large enough such that $\forall k \geq K$, $v(\theta_{n_k}) < \delta$. This is possible because $v(\theta_{n_k})$ converges to zero. Since $\epsilon$ was arbitrary, we have that the sequence $\{\theta_{n_k}\}$ converges to $\theta^\star$.
\end{proof}

\begin{remark}
	If it is known that $R_D$ is sublinear, we can acquire a convergent sequence by taking
	\begin{align*}
	\hat \theta_N = \argmin_{\theta \in  \{\theta_n\}_{n = 1}^N} v(\theta).
	\end{align*} 
\end{remark}

The function $v(\theta)$ is reminiscent of a Lyapunov function. Indeed, the observation that sublinear dynamic regret implies convergence essentially mirrors the proof of Lyapunov's direct method for asymptotic stability \citep{khalil2002nonlinear,sastry2013nonlinear}. Generalizations of such functions also appear in literature on variational inequalities as we noted in \cite{cheng2019online}. In this field, they are referred to as merit functions, acting as proxies between the variational inequality problem and the other optimization problems with known solutions \citep{fukushima1996merit}. In our case, we use $v(\theta)$ as a merit function between between the dynamic regret problem and the fixed-point problem, which is reformulation of certain variational inequalities \citep{facchinei2007finite}. We leave extended discussion on this topic for Section~\ref{sec:col}.

So far, we have shown that achieving sublinear dynamic regret reveals not only that policies are performing optimally on their own distributions, but also that we can guarantee convergence to a unique fixed point when $\lambda < 1$. We now turn our attention to popular on-policy algorithms, showing that it is indeed possible for these algorithms to achieve sublinear regret under conditions on the loss function properties $\alpha$, $\beta$, and $\gamma$.

\subsection{DAgger}

We now introduce a dynamic regret corollary to Theorem 2 of \cite{cheng2018convergence}. For convenience, we restate their result.
\begin{theorem}[(\cite{cheng2018convergence}, Theorem 2)] 
	For $\lambda = \frac{\beta}{\alpha}$ and $n \geq 1$, it holds that
	\begin{align*}
	f_n(\theta_n) - f_n(\theta_n^*) \leq \frac{\left(\lambda e^{1-\lambda} G \right)^2}{2\alpha n^{2(1 - \lambda)}},
	\end{align*}
	and, if $\lambda < 1$, then $\{ \theta_n \}_{n=1}^\infty$ is convergent.
\end{theorem}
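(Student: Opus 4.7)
The plan is to reduce the suboptimality gap at iteration $n$ to the magnitude of the gradient $\|\nabla f_n(\theta_n)\|$ and then to bound that gradient via a discrete Gronwall-type inequality whose solution decays like $n^{-(1-\lambda)}$. Since $\theta_n^*$ is in the relative interior of $\Theta$ and $f_n$ is $\alpha$-strongly convex, the inequality $f_n(\theta_n) - f_n(\theta_n^*) \leq \frac{1}{2\alpha}\|\nabla f_n(\theta_n)\|^2$ is a direct consequence of strong convexity combined with $\nabla f_n(\theta_n^*) = 0$, so the entire task reduces to proving $\|\nabla f_n(\theta_n)\| \leq \lambda e^{1-\lambda} G \cdot n^{-(1-\lambda)}$.

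The first key step exploits follow-the-leader optimality: because $\theta_n$ minimizes $\sum_{m=1}^{n-1} f_m$, first-order optimality gives $\sum_{m=1}^{n-1}\nabla f_m(\theta_n) = 0$. Adding and subtracting $(n-1)\nabla f_n(\theta_n)$ yields the identity $(n-1)\nabla f_n(\theta_n) = \sum_{m=1}^{n-1}[\nabla f_{\theta_n}(\theta_n) - \nabla f_{\theta_m}(\theta_n)]$, and Assumption~\ref{regularity} (distribution continuity) then produces
\begin{align*}
\|\nabla f_n(\theta_n)\| \leq \frac{\beta}{n-1}\sum_{m=1}^{n-1}\|\theta_n - \theta_m\|.
\end{align*}
The second key step controls the per-step drift $\|\theta_{k+1} - \theta_k\|$ via a stability argument for the leader: subtracting the optimality conditions at $\theta_{k+1}$ and $\theta_k$ and invoking $(k-1)\alpha$-strong monotonicity of $\sum_{m=1}^{k-1}\nabla f_m$ yields $\|\theta_{k+1} - \theta_k\| \leq \|\nabla f_k(\theta_{k+1})\|/((k-1)\alpha)$, which Assumption~\ref{smoothness} bounds crudely by $G/((k-1)\alpha)$. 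Telescoping $\|\theta_n - \theta_m\| \leq \sum_{k=m}^{n-1}\|\theta_{k+1} - \theta_k\|$ and substituting back, while relating $\|\nabla f_k(\theta_{k+1})\|$ to $g_k := \|\nabla f_k(\theta_k)\|$ through smoothness, gives a recurrence of the form $g_n \leq \frac{\lambda}{n-1}\sum_{k=1}^{n-1} g_k$.

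For convergence when $\lambda < 1$, strong convexity gives $\|\theta_n - \theta_n^*\| \leq g_n/\alpha \to 0$, and Corollary~\ref{uniqueness} furnishes the unique fixed point $\theta^\star = F(\theta^\star)$. Since $F$ is a $\lambda$-contraction (Proposition~\ref{runaway}), $\|\theta_n^* - \theta^\star\| \leq \lambda\|\theta_n - \theta^\star\|$, so the triangle inequality yields $\|\theta_n - \theta^\star\| \leq \|\theta_n - \theta_n^*\|/(1-\lambda) \to 0$. The hard part will be solving the Gronwall recurrence with the sharp constant: a direct triangle inequality on $\|\theta_n - \theta_m\|$ via Proposition~\ref{runaway} is circular because it reproduces $\lambda\|\theta_n - \theta_m\|$, which is precisely what forces the indirect telescoping route, and extracting the specific numerator $\lambda e^{1-\lambda} G$ requires carefully bounding $\prod_{k}(1 + \lambda/k) \leq e^{O(1)} n^\lambda$ while tracking constants rather than absorbing them into big-$O$ notation.
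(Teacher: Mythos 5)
The paper does not actually prove this statement: it is Theorem~2 of \cite{cheng2018convergence}, restated verbatim as imported background, and the paper's own contribution at this point is only the dynamic-regret corollary that follows. So there is no in-paper proof to compare against; I can only assess your argument on its own terms, and on those terms the skeleton is right --- it is essentially the strategy of the cited source: a PL-type reduction of the gap to $\frac{1}{2\alpha}\|\nabla f_n(\theta_n)\|^2$, the follow-the-leader identity $(n-1)\nabla f_n(\theta_n)=\sum_{m<n}[\nabla f_{\theta_n}(\theta_n)-\nabla f_{\theta_m}(\theta_n)]$, leader stability, and a Gronwall product. Your convergence argument via the contraction $F$ and $\|\theta_n-\theta^\star\|\leq \|\theta_n-\theta_n^*\|/(1-\lambda)$ is also correct.

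Two steps need repair. First, bounding $\|\nabla f_k(\theta_{k+1})\|$ ``crudely by $G$'' cannot be where the recurrence comes from: with that bound the double sum $\frac{\beta}{n-1}\sum_{m<n}\sum_{k\geq m}\frac{G}{(k-1)\alpha}$ evaluates to a non-decaying $O(\lambda G)$, and the repair you gesture at --- re-expressing $\|\nabla f_k(\theta_{k+1})\|$ in terms of $g_k$ via smoothness --- contaminates the constant with $\gamma/\alpha$ factors that do not appear in the stated bound. The clean route is the sharper per-step estimate $\|\theta_{k+1}-\theta_k\|\leq g_k/(k\alpha)$, obtained from the $k\alpha$-strong monotonicity of $\sum_{m\leq k}\nabla f_m$ together with $\sum_{m\leq k}\nabla f_m(\theta_{k+1})=0$ and $\sum_{m<k}\nabla f_m(\theta_k)=0$ (so that $\sum_{m\leq k}\nabla f_m(\theta_k)=\nabla f_k(\theta_k)$); no smoothness is needed, and the double sum then telescopes exactly to $g_n\leq\frac{\lambda}{n-1}\sum_{k<n}g_k$ as you claim. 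Second, every first-order condition you invoke requires the minimizer of the aggregate $\sum_{m<n}f_m$ to lie in the relative interior of $\Theta$, whereas Assumption~\ref{relint} only asserts this for individual losses $f_{\theta'}$; this must be assumed or argued separately. Finally, the parts you defer are genuinely where the stated constants live: the naive bound $\prod_{k<n}(1+\lambda/k)\leq e^{\lambda(1+\ln n)}$ yields $\lambda e^{\lambda}G\,n^{-(1-\lambda)}$ rather than $\lambda e^{1-\lambda}G\,n^{-(1-\lambda)}$, and the $n=1$ case has no optimality condition to exploit, so as written your sketch recovers the rate but not the theorem's exact form.
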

\begin{corollary}
	For DAgger  (Algorithm \ref{alg:ftl}), if $\lambda < 1$, then the average dynamic regret $\frac{1}{N}R_D$ tends towards zero in $N$ with rate $O(\max(N^{-1}, N^{2\lambda - 2}))$.
\end{corollary}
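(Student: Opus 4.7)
The plan is to plug the per-iteration bound from the Theorem immediately preceding the corollary into the definition of dynamic regret and evaluate a standard $p$-series. Since $\theta_n^* = \argmin_{\theta \in \Theta} f_n(\theta)$, we may write
\begin{align*}
R_D(\theta_1,\ldots,\theta_N) = \sum_{n=1}^N \bigl[f_n(\theta_n) - f_n(\theta_n^*)\bigr],
\end{align*}
and the cited pointwise bound gives $f_n(\theta_n) - f_n(\theta_n^*) \leq C\, n^{-2(1-\lambda)}$, where $C = (\lambda e^{1-\lambda} G)^2 / (2\alpha)$ is a constant depending only on the problem parameters. Hence $R_D \leq C \sum_{n=1}^N n^{-p}$ with $p := 2(1-\lambda) > 0$ (using $\lambda < 1$), reducing the whole argument to estimating this sum as a function of $p$.

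Next I would case-split on $p$ using the integral comparison test. If $p > 1$ (equivalently $\lambda < 1/2$), then $\sum_n n^{-p}$ is summable and bounded by a constant independent of $N$, giving $R_D = O(1)$ and therefore $\tfrac{1}{N}R_D = O(N^{-1})$. If $0 < p < 1$ (equivalently $1/2 < \lambda < 1$), comparing with $\int_1^N x^{-p}\,dx$ yields $\sum_{n=1}^N n^{-p} = O(N^{1-p})$, so $R_D = O(N^{2\lambda - 1})$ and $\tfrac{1}{N}R_D = O(N^{2\lambda - 2})$. Taking the larger of the two regimes then gives the stated $O(\max(N^{-1}, N^{2\lambda - 2}))$, since $2\lambda - 2 < -1$ precisely when $\lambda < 1/2$ and $2\lambda - 2 > -1$ precisely when $\lambda > 1/2$.

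I do not anticipate any serious obstacle: the argument is essentially term-by-term substitution into the cited per-iteration bound, followed by a textbook $p$-series estimate. The only subtle point is the boundary case $\lambda = 1/2$, where the series becomes harmonic and contributes an extra $\log N$ factor; here the two exponents $N^{-1}$ and $N^{2\lambda-2}$ coincide, and one either absorbs the logarithm into the hidden constant of the $\max$ or notes the boundary separately. In every case the average dynamic regret $\tfrac{1}{N}R_D$ tends to zero in $N$ at the stated rate, which is strictly faster than $O(1)$ and so qualifies as sublinear, completing the proof.
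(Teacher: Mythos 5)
Your proposal is correct and follows essentially the same route as the paper: substitute the per-iteration bound from Theorem 2 of \cite{cheng2018convergence} into the definition of $R_D$, sum, and estimate the resulting $p$-series with $p = 2(1-\lambda)$ to obtain $R_D = O(\max(1, N^{2\lambda-1}))$ and hence $\tfrac{1}{N}R_D = O(\max(N^{-1}, N^{2\lambda-2}))$. Your explicit handling of the boundary case $\lambda = 1/2$, where the harmonic sum contributes a $\log N$ factor not strictly absorbed by the stated $\max$, is a point the paper's proof glosses over, but it does not affect the conclusion that the average dynamic regret vanishes.
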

\begin{proof}
	The proof is immediate from the result of Theorem 2 of \cite{cheng2018convergence}. We have 
	\begin{align*}
	f_n(\theta_n) - f_n(\theta_n^*) \leq \frac{\left(\lambda e^{1-\lambda} G \right)^2}{2\alpha n^{2(1 - \lambda)}}.
	\end{align*}
	Summing from $1$ to $N$, we get \begin{align*}\sum_{n = 1}^N f_n(\theta_n) - \sum_{n = 1}^N f_n(\theta_n^*) & \leq \sum_{n = 1}^N \frac{\left(\lambda e^{1-\lambda} G \right)^2}{2\alpha n^{2(1 - \lambda)}}  \\
	& = O(\max(1, N^{2\lambda - 1})).\end{align*}
	Then the average dynamic regret  is \begin{align*}\frac{1}{N}R_D = O(\max(N^{-1}, N^{2\lambda - 2})),\end{align*} which approaches zero in $N$.
	\end{proof}

The corollary reveals that the convergence result for DAgger proved by Cheng and Boots can be reframed as a dynamic regret analysis. The rate is dependent on the stability constant $\lambda < 1$. The dynamic regret shows that policies generated from DAgger on average achieve local optimality and that for a sufficiently small $\lambda$ the regret grows no more than a finite amount.

\subsection{Imitation Gradient}
For the analysis of dynamic regret bounds for the Imitation Gradient algorithm, we require a slightly stronger condition that $\alpha^2 > 2 \gamma \beta$. Written another way, the condition is $2 \lambda < \psi$ where $\lambda$ is the stability constant and $\psi := \frac{\alpha}{\gamma}$ is the condition number of $f_n$. So we require that the problem is both stable and well-conditioned. The proof of this theorem, which can be found in the next section, makes use of the path variation.

\begin{theorem}\label{ogd_theorem}
	For Imitation Gradient  (Algorithm \ref{alg:ogd}), if $\lambda < 1$, $2 \lambda < \psi$ and $\eta = \frac{\alpha(\alpha^2 - 2\gamma \beta)}{2\gamma^2(\alpha^2 - \beta^2)}$, then the average dynamic regret $\frac{1}{N}R_D$ tends towards zero in $N$ with rate $O(N^{-1})$. Furthermore, $f_n(\theta_n) - f_n(\theta_n^*)$ converges to zero and the sequence of policies $\{\theta_n\}_{n=1}^\infty$ is convergent.
\end{theorem}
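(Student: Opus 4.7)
The plan is to track the error $e_n \define \|\theta_n - \theta_n^*\|$ and show it contracts geometrically under the stated stepsize, from which both the dynamic regret rate and convergence follow easily. I would open with the triangle inequality decomposition
\begin{align*}
e_{n+1} = \|\theta_{n+1} - \theta_{n+1}^*\| \leq \|\theta_{n+1} - \theta_n^*\| + \|\theta_n^* - \theta_{n+1}^*\|,
\end{align*}
and then control the two summands separately. For the first summand, I would invoke the standard one-step analysis of projected gradient descent on an $\alpha$-strongly convex, $\gamma$-smooth function: using nonexpansiveness of $P_\Theta$, the identity $\nabla f_n(\theta_n^*)=0$ (Assumption~\ref{relint}), strong convexity (coercivity), and the smoothness bound $\|\nabla f_n(\theta_n)\| \leq \gamma e_n$, one obtains
\begin{align*}
\|\theta_{n+1} - \theta_n^*\|^2 \leq (1 - 2\alpha \eta + \gamma^2 \eta^2)\, e_n^2,
\end{align*}
so this term is at most $\rho\, e_n$ with $\rho \define \sqrt{1 - 2\alpha\eta + \gamma^2 \eta^2}$.

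For the second summand I would apply Proposition~\ref{runaway} with $\theta = \theta_n$ and $\hat\theta = \theta_{n+1}$ to get $\|\theta_n^* - \theta_{n+1}^*\| \leq \lambda \|\theta_n - \theta_{n+1}\|$, and then use that the gradient step has length at most $\eta \gamma e_n$ (again by smoothness and $\nabla f_n(\theta_n^*)=0$, combined with the fact that projection onto $\Theta$ is nonexpansive with respect to $\theta_n \in \Theta$). Combining the two pieces yields the one-step recursion
\begin{align*}
e_{n+1} \leq \bigl(\rho + \lambda \eta \gamma\bigr)\, e_n =: \mu\, e_n.
\end{align*}
The critical algebraic step, which I expect to be the main obstacle, is showing that with the prescribed $\eta = \frac{\alpha(\alpha^2 - 2\gamma\beta)}{2\gamma^2(\alpha^2 - \beta^2)}$ the contraction factor $\mu$ is strictly less than $1$. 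The conditions $\lambda < 1$ and $2\lambda < \psi$ (equivalently $\alpha^2 > 2\gamma\beta$) are designed precisely so that $\eta > 0$, the quantity under the square root in $\rho$ is nonnegative, and $\rho + \lambda\eta\gamma < 1$. I would verify this by squaring the target inequality $\rho < 1 - \lambda\eta\gamma$ (after checking the right-hand side is positive) and reducing it to a condition on $\eta$ of the form $\eta \leq \frac{2(\alpha - \lambda\gamma)}{\gamma^2(1-\lambda^2)}$, which the given $\eta$ satisfies under $2\lambda < \psi$.

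Once $e_n \leq \mu^{n-1} e_1$ is in hand, the rest is routine. Using $\gamma$-smoothness and $\nabla f_n(\theta_n^*) = 0$, I would bound the per-iteration regret by $f_n(\theta_n) - f_n(\theta_n^*) \leq \tfrac{\gamma}{2} e_n^2 \leq \tfrac{\gamma}{2} \mu^{2(n-1)} e_1^2$, so summing a geometric series gives
\begin{align*}
R_D(\theta_1,\dots,\theta_N) \leq \frac{\gamma\, e_1^2}{2(1-\mu^2)},
\end{align*}
a constant independent of $N$, hence $\tfrac{1}{N} R_D = O(N^{-1})$. The per-iteration excess $f_n(\theta_n) - f_n(\theta_n^*)$ clearly vanishes. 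For convergence of $\{\theta_n\}$, I would argue that the iterate displacements $\|\theta_n - \theta_{n+1}\| \leq \eta\gamma e_n$ are geometrically summable, so $\{\theta_n\}$ is Cauchy in the compact set $\Theta$ and converges to some limit $\theta^\infty$; by Proposition~\ref{runaway} the minimizers $\{\theta_n^*\}$ are also Cauchy with the same limit, and since $\lambda < 1$ ensures the stationary point $\theta^\star$ of Corollary~\ref{uniqueness} is unique, the contraction-mapping argument forces $\theta^\infty = \theta^\star$.
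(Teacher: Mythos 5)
Your proposal is correct and is essentially the paper's own argument: the paper combines the same two ingredients (the one-step projected-gradient contraction toward $\theta_n^*$ giving the factor $\rho$, and Proposition~\ref{runaway} together with the gradient-step length bound $\|\theta_{n+1}-\theta_n\|\le\eta\gamma\,e_n$ giving the minimizer-drift term $\lambda\eta\gamma\,e_n$), only it assembles them at the level of the sum $\sum_n e_n$ rather than as a per-iterate recursion, and your reduced stepsize condition $\eta \le 2\alpha(\alpha^2-\beta\gamma)/\bigl(\gamma^2(\alpha^2-\beta^2)\bigr)$ is exactly the analogue of the paper's, which the prescribed $\eta$ satisfies with room to spare. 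Your per-step packaging is marginally stronger in that it yields explicit geometric decay $e_n\le\mu^{n-1}e_1$ (and hence identifies the limit as the unique fixed point $\theta^\star$), whereas the paper only concludes $\sum_n e_n=O(1)$; both give $R_D=O(1)$ and the same convergence statements.
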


%The proof is in the appendix. 
Intuitively, the theorem states that if the conditions are met, the dynamic regret grows no more than a constant value. If we again interpret the variation as describing the difficulty of a problem, this theorem suggests that under the appropriate conditions, solving an imitation learning problem with Imitation Gradient algorithms is as easy as solving a general dynamic regret problem with path variation $V(\theta^*_{1:N}) = O(1)$. In other words, the equivalent dynamic regret problem is stationary in the limit: the optimal parameters cumulatively move no more than a finite distance. The reason is that the change in the loss functions is so closely tied to the policy parameters in imitation learning.

%For this proof, we rely on two Lemmas

\subsection{ Multiple Imitation Gradient}

We now present a similar dynamic regret rate for the Multiple Imitation Gradient algorithm. This theorem will make use of the squared path variation as opposed to the path variation. The squared path variation is especially amenable for a conversion from the standard dynamic regret bounds to interpretable rates for imitation learning, as we will see in the proof of the theorem.

%
%
%A straightforward conversion to characterize the measure of variation can be established by simply bounding from above the squared path variation by a quantity proportional to the dynamic regret using Assumption \ref{regularity}. We refer to this technique as establishing the reverse relationship between the measure of variation and the dynamic regret. To illustrate this conversion in detail, we present a proof sketch, which leverages the general dynamic regret rate for online multiple gradient descent first given by \cite{zhang2017improved}.
%
%\begin{lemma}[(\cite{zhang2017improved}, Theorem 3)]\label{zhang-theorem}
%	If Assumptions 1-3 hold and $\eta < 1/\gamma$ and $K = \lceil \frac{1/\eta + \alpha}{2\alpha}\log 4 \rceil$, then the following is true for online multiple gradient descent:
%	\begin{align*}
%	R_D(\theta_1, \ldots \theta_N) \leq 2\gamma S(\theta^*_{1:N}) + \gamma \|\theta_1 - \theta_1^*\|^2.
%	\end{align*}\end{lemma}

%From this lemma, a specific result for imitation learning can obtained in a straightforward manner by incorporating the regularity and the strong convexity of the loss functions. 

\begin{theorem}\label{omgd_theorem}
	For Multiple Imitation Gradient (Algorithm \ref{alg:omgd}), if $\lambda < 1$, $ \lambda \log 4 < \psi^{3/2}$, $\eta < \min\left\{1/\gamma, \frac{\alpha^{5/2} - \gamma^{3/2} \beta \log 4}{2\gamma^{3/2}\alpha \beta \log 4}\right\}$, and $K = \lceil \frac{1/\eta + \alpha}{2\alpha}\log 4 \rceil$ then the average dynamic regret $\frac{1}{N}R_D$ tends towards zero in $N$ with rate $O(N^{-1})$. Furthermore,  $f_n(\theta_n) - f_n(\theta_n^*)$ converges to zero and the sequence of policies $\{\theta_n\}_{n=1}^\infty$ is convergent.
\end{theorem}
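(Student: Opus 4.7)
The plan is to mirror the structure of Theorem~\ref{ogd_theorem} but use the squared path variation $S(\theta^*_{1:N})$ in place of the path variation, since $S$ couples cleanly to strongly convex function value gaps through the quadratic error identity. The starting point is the dynamic regret bound for online multiple gradient descent from \cite{zhang2017improved}: under Assumptions~\ref{convexity}-\ref{smoothness}, if the inner loop of $K$ projected gradient steps with step size $\eta \leq 1/\gamma$ is a strong contraction (specifically $(1-\eta\alpha)^{K}\leq 1/4$, which is exactly the role of $K = \lceil\frac{1/\eta+\alpha}{2\alpha}\log 4\rceil$), then one obtains a bound of the form $R_D \leq C_0 + C_1\, S(\theta^*_{1:N})$, where $C_0$ absorbs an initial-gap term and $C_1$ is an explicit expression in $\alpha,\gamma,\eta$.

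The next step is to control $S(\theta^*_{1:N})$ by the algorithm's own function value gaps. First I would apply Proposition~\ref{runaway} in squared form: $\|\theta_n^*-\theta_{n+1}^*\|^2 \leq \lambda^2 \|\theta_n-\theta_{n+1}\|^2$. Then expand using the triangle inequality and the contraction, roughly
\[
\|\theta_n-\theta_{n+1}\|^2 \leq 2\|\theta_n-\theta_n^*\|^2 + 2\|\theta_{n+1}-\theta_n^*\|^2 \leq 2\bigl(1+\tfrac{1}{4}\bigr)\|\theta_n-\theta_n^*\|^2,
\]
and finally invoke strong convexity to get $\|\theta_n-\theta_n^*\|^2 \leq \tfrac{2}{\alpha}\bigl(f_n(\theta_n)-f_n(\theta_n^*)\bigr)$. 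Chaining these yields $S(\theta^*_{1:N}) \leq C_2\, R_D$ with $C_2 = O(\lambda^2/\alpha)$.

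Substituting back gives the self-referencing inequality $R_D \leq C_0 + C_1 C_2\, R_D$. Provided the product $C_1 C_2 < 1$, we obtain $R_D \leq C_0/(1-C_1 C_2) = O(1)$, and hence $\tfrac{1}{N}R_D = O(N^{-1})$. The verification that $C_1 C_2 < 1$ reduces, after substituting the explicit form of $C_1$ from the Zhang et~al.\ bound and using $\eta < (\alpha^{5/2}-\gamma^{3/2}\beta\log 4)/(2\gamma^{3/2}\alpha\beta\log 4)$, to the clean condition $\lambda\log 4 < \psi^{3/2}$ stated in the theorem. This constant-tracking is the main obstacle: matching the factors arising from (i) the contraction exponent tied to $\log 4$, (ii) the condition number $\psi=\alpha/\gamma$ entering Zhang et~al.'s rate, and (iii) the stability constant $\lambda$ from Proposition~\ref{runaway}, so that they combine to exactly the stated threshold.

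For the convergence claims, since $R_D = \sum_{n}(f_n(\theta_n)-f_n(\theta_n^*))$ is a convergent series of nonnegative terms, the summands $f_n(\theta_n)-f_n(\theta_n^*)$ must tend to zero, and by strong convexity $\|\theta_n-\theta_n^*\|\to 0$. Since $\lambda<1$, Corollary~\ref{uniqueness} yields a unique stationary point $\theta^\star$, and then applying the preceding theorem (sublinear dynamic regret $\Rightarrow$ a subsequence converges to $\theta^\star$) together with the vanishing instantaneous gap upgrades this to convergence of the whole sequence $\{\theta_n\}\to\theta^\star$.
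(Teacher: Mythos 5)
Your overall architecture is the same as the paper's: start from the Zhang et al.\ bound $R_D \leq 2\gamma S(\theta^*_{1:N}) + \gamma\|\theta_1-\theta_1^*\|^2$ (Lemma~\ref{zhang-theorem}), bound $S(\theta^*_{1:N})$ back in terms of $R_D$ via Proposition~\ref{runaway} and strong convexity, and close the self-referencing inequality. The convergence claims at the end are also handled acceptably (the monotone-series argument for the function gaps is exactly the paper's; your route to convergence of $\{\theta_n\}$ via $v(\theta_n)\to 0$ and uniqueness of the fixed point is a valid alternative to the paper's summability-of-consecutive-differences argument). The problem is the step you yourself flag as ``the main obstacle'': the constant-tracking does not reduce to the stated threshold, and under the theorem's hypotheses your self-bounding coefficient can exceed $1$.

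Concretely, your coupling of $S$ to $R_D$ goes through $\|\theta_n-\theta_{n+1}\|^2 \leq 2\bigl(1+\tfrac14\bigr)\|\theta_n-\theta_n^*\|^2$, so with Proposition~\ref{runaway} squared and $\|\theta_n-\theta_n^*\|^2 \leq \tfrac{2}{\alpha}\bigl(f_n(\theta_n)-f_n(\theta_n^*)\bigr)$ you get $S(\theta^*_{1:N}) \leq \tfrac{5\lambda^2}{\alpha}R_D$ and hence $C_1C_2 = 2\gamma\cdot\tfrac{5\lambda^2}{\alpha} = 10\lambda^2/\psi$. This quantity contains no $\eta$, no $K$, and no $\log 4$, so there is no way the $\eta$ hypothesis or the threshold $\lambda\log 4<\psi^{3/2}$ can enter; your route instead requires $10\lambda^2<\psi$, which is neither implied by nor implies the stated condition. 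For instance, take $\alpha=\gamma=1$, $\beta=\tfrac12$: then $\lambda=\tfrac12$, $\psi=1$, $\lambda\log 4\approx 0.69<1=\psi^{3/2}$, and an admissible $\eta$ exists, so the theorem's hypotheses hold; but $10\lambda^2=2.5>1$ and your inequality $R_D\leq 2.5\,R_D+\gamma D^2$ is vacuous. The paper closes the loop differently: it bounds $\|\theta_{n+1}-\theta_n\|$ by the cumulative inner-loop displacement $\eta\sum_{j=1}^K\|\nabla f_n(\theta_n^j)\| \leq \eta\gamma K\|\theta_n-\theta_n^*\|$ (using $\nabla f_n(\theta_n^*)=0$ and Lemma~\ref{lemma:decreasing-distance}), giving $S(\theta^*_{1:N})\leq\bigl(\tfrac{\beta\eta\gamma K}{\alpha}\bigr)^2\sum_n\|\theta_n-\theta_n^*\|^2$ and the self-bounding coefficient $\tfrac{4\beta^2\eta^2\gamma^3K^2}{\alpha^3}$; since $\eta K\approx\tfrac{(1+\eta\alpha)\log 4}{2\alpha}$, this is where $\log 4$ and the power $\psi^{3/2}$ actually come from, and the stated $\eta$ bound is exactly what makes that coefficient less than $1$. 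To prove the theorem as stated you need this route (or you would need to restate the hypotheses to match the $10\lambda^2<\psi$ condition your decomposition delivers).
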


The above theorem demonstrates that again a constant upper bound on the dynamic regret can be obtained with this algorithm. Interestingly, the conditions sufficient to guarantee convergence are different. Instead of requiring $2\lambda < \psi$, this theorem requires $ \lambda \log 4 < \psi^{3/2}$. While $\log 4 < 2$, we also have that $\psi^{3/2} \leq \psi$ because the condition number, which is the ratio of $\alpha$ and $\gamma$, is at most 1.

The implication of this observation is that there is a trade-off. For the Multiple Imitation Gradient algorithm, we can guarantee $O(N^{-1})$ average regret for higher values of $\lambda$, i.e. more sensitive systems, but we must then require that our loss functions are better conditioned. Conversely, the Imitation Gradient algorithm achieves a similar guarantee using losses with lower condition numbers, but $\lambda$ must be smaller.

\subsection{Connections to Continuous Online Learning and the Variational Inequality Problem}\label{sec:col}

The results in this section are closely related to the problem of continuous online learning, which we introduced in \cite{cheng2019online}. Indeed, the assumptions that we impose on the bifunction $f_{\theta'}(\theta)$ fall within this framework. One of the key features of continuous online learning is that achieving dyanmic regret in these settings is equivalent to solving certain variational inequalities, fixed-point problems, and equilibrium problems, paralleling our results in the previous section. Here, we review the variational inequality problem and its connections to on-policy imitation learning. A detailed overview can be found in \cite{facchinei2007finite}.

The variational inequality aims to find solutions to the following problem: for a map $F : \Theta \mapsto \R^d$, find $\theta \in \Theta$ such that
\begin{align*}
\< F(\theta), \theta' - \theta\> \geq 0 \quad  \forall \theta' \in \Theta.
\end{align*}
The problem is written as $\vi(\Theta, F)$ and the set of solutions is written as $\sol(\Theta, F)$. The map $F$ is said to be monotone if 
\begin{align*}
\< F(\theta) - F(\theta'), \theta - \theta' \>  \geq 0  \quad  \forall \theta, \theta' \in \Theta.
\end{align*}
It is said to be $\zeta$-strongly monotone if
\begin{align*}
\< F(\theta) - F(\theta'), \theta - \theta' \>  \geq \zeta \| \theta - \theta'\|^2  \quad  \forall \theta, \theta' \in \Theta.
\end{align*}
In literature on variational inequalities, strongly monotone functions conduce to efficient algorithms for finding solutions.

In the context of continuous online learning, it can be seen by first order optimality conditions that $f_\theta(\theta) - \min_{\theta' \in \Theta} f_\theta(\theta') = 0$ if and only if $\theta$ is a solution to the variational inequality with $F(\theta) \define \nabla f_\theta(\theta)$. In \cite{cheng2019online}, we proved that if $\alpha > \beta$, $f_{\theta}(\cdot)$ is $\alpha$-strongly convex, and $\nabla f_{\cdot}(\theta)$ is $\beta$-Lipschitz, then $\nabla f_\theta(\theta)$ is a $(\alpha - \beta)$-strongly monotone map.
There are two important implications that are direct results of standard variational inequalities. First, because $\nabla f_\theta(\theta)$ is strongly monotone and $\Theta$ is compact, there is a unique solution to $\vi(\Theta, \nabla f)$. Second, there exist efficient algorithms to find this solutions such as the Basic Projection Algorithm \citep{facchinei2007finite}
\begin{align*}
\theta_{n+1} = P_\Theta( \theta_{n} - \eta F(\theta_n)),
\end{align*}
which exactly corresponds to online gradient descent. There are a number of other properties of variational inequalities, equilibrium problems, and fixed-point problems that are potentially of interest for future work in the on-policy imitation learning problem.

\section {Complete Proofs of Algorithm Guarantees}
 We now present the proofs of the theorems for both gradient-based methods. Both will make heavy use of the following well-known result on the strong convexity of a function \citep{hazan2014beyond}.
\begin{lemma}\label{lemma:stronglyconvex-simple}
	The following holds for all $\theta \in \Theta$ and $\theta^* = \argmin_{\theta'} f(\theta')$:
	\begin{align*}
	f(\theta) - f(\theta^*) \geq \frac{\alpha}{2}\|\theta - \theta^*\|^2.
	\end{align*}
\end{lemma}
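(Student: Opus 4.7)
The plan is to invoke the definition of strong convexity directly and then eliminate the gradient inner-product term using first-order optimality of $\theta^*$. Specifically, I would start from the strong convexity inequality applied at the base point $\theta^*$ and the test point $\theta$:
\begin{equation*}
f(\theta) \geq f(\theta^*) + \langle \nabla f(\theta^*), \theta - \theta^* \rangle + \frac{\alpha}{2}\|\theta - \theta^*\|^2.
\end{equation*}
This is just the defining inequality for $\alpha$-strong convexity recalled in the Mathematical Background subsection.

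Next I would remove the inner product. Since $\theta^*$ is the minimizer of $f$ over $\Theta$ and $\Theta$ is convex, the first-order necessary optimality condition gives $\langle \nabla f(\theta^*), \theta - \theta^* \rangle \geq 0$ for all $\theta \in \Theta$. (In the unconstrained case, or when $\theta^*$ lies in the relative interior as in Assumption~\ref{relint}, this inner product is simply zero since $\nabla f(\theta^*) = 0$; either way the term is non-negative.) Dropping this non-negative term yields
\begin{equation*}
f(\theta) \geq f(\theta^*) + \frac{\alpha}{2}\|\theta - \theta^*\|^2,
\end{equation*}
and rearranging gives the claim.

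There is no real obstacle here since the result is a textbook consequence of strong convexity plus first-order optimality; the only thing to be careful about is stating the optimality condition in a way that covers both the interior and the boundary cases so that the proof applies to the strongly convex losses $f_n$ used in the regret theorems regardless of where their minimizers $\theta_n^*$ sit in $\Theta$.
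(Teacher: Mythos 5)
Your proof is correct and is the standard argument: apply the defining inequality of $\alpha$-strong convexity at the minimizer $\theta^*$ and drop the inner-product term via the first-order optimality condition $\langle \nabla f(\theta^*), \theta - \theta^*\rangle \geq 0$ on the convex set $\Theta$. The paper states this lemma as a well-known fact (citing prior work) without reproducing a proof, and your derivation is exactly the intended one, correctly handling both the interior case of Assumption~\ref{relint} and the general boundary case.
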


\subsection{Proof of Theorem~\ref{ogd_theorem}}
	Before directly proving this theorem, we establish several supporting results based on the path variation.
	\begin{lemma}\label{path_var_bound}
		For a sequence of predictions made by Imitation Gradient $\{\theta_n\}_{n=1}^N$ and a sequence of optimal parameters $\{\theta^*_n\}_{n=1}^N$, the following inequality holds on the path variation:
		\begin{align*}
		V(\theta^*_{1:N}) \leq \eta \frac{\beta\gamma}{\alpha} \sum_{n = 1}^N \| \theta_n - \theta^*_n \|.
		\end{align*}
	\end{lemma}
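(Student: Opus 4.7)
The plan is to bound each term in the definition $V(\theta^*_{1:N}) = \sum_{n=1}^{N-1}\|\theta_n^* - \theta_{n+1}^*\|$ by combining Proposition~\ref{runaway} with a bound on the consecutive iterate displacement $\|\theta_{n+1} - \theta_n\|$ that follows from the gradient update rule and smoothness.

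First I would apply Proposition~\ref{runaway} with $\theta = \theta_n$ and $\hat\theta = \theta_{n+1}$, which gives $\|\theta_n^* - \theta_{n+1}^*\| \leq \lambda \|\theta_n - \theta_{n+1}\| = \frac{\beta}{\alpha}\|\theta_n - \theta_{n+1}\|$. This reduces the problem to controlling the step size of the algorithm. Next, using the Imitation Gradient update $\theta_{n+1} = P_\Theta(\theta_n - \eta \nabla f_n(\theta_n))$ together with the non-expansiveness of projection onto the convex set $\Theta$ (and the fact that $\theta_n = P_\Theta(\theta_n)$ since $\theta_n \in \Theta$), I get $\|\theta_{n+1} - \theta_n\| \leq \eta \|\nabla f_n(\theta_n)\|$.

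Then I would bound $\|\nabla f_n(\theta_n)\|$ via smoothness: by Assumption~\ref{relint}, $\nabla f_n(\theta_n^*) = 0$, and by Assumption~\ref{smoothness}, $\|\nabla f_n(\theta_n)\| = \|\nabla f_n(\theta_n) - \nabla f_n(\theta_n^*)\| \leq \gamma \|\theta_n - \theta_n^*\|$. Chaining these three inequalities yields
\begin{equation*}
\|\theta_n^* - \theta_{n+1}^*\| \leq \eta \frac{\beta\gamma}{\alpha}\|\theta_n - \theta_n^*\|.
\end{equation*}
Summing from $n=1$ to $N-1$ gives the path variation bound, and extending the sum on the right to $n=N$ (which only enlarges it, since the summands are nonnegative) produces the stated inequality.

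I don't expect real obstacles here; the only subtlety is recognizing that Proposition~\ref{runaway} can be instantiated on consecutive iterates (so the change in optimal parameters is controlled by the change in played parameters) and that $\nabla f_n(\theta_n)$ is already controlled by the suboptimality $\|\theta_n - \theta_n^*\|$ via smoothness combined with the vanishing-gradient optimality condition. The lemma is essentially the concrete link that will let later steps translate standard dynamic regret bounds stated in terms of $V(\theta^*_{1:N})$ into bounds stated in terms of the per-iteration suboptimality $\|\theta_n - \theta_n^*\|$, which is in turn controlled by strong convexity (Lemma~\ref{lemma:stronglyconvex-simple}) to give a self-bounding recursion for the regret.
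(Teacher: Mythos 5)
Your proposal is correct and follows essentially the same argument as the paper's proof: Proposition~\ref{runaway} on consecutive iterates, the update rule to bound $\|\theta_{n+1}-\theta_n\|$ by $\eta\|\nabla f_n(\theta_n)\|$, and then smoothness together with $\nabla f_n(\theta_n^*)=0$ to reach $\eta\frac{\beta\gamma}{\alpha}\|\theta_n-\theta_n^*\|$ before summing. Your explicit invocation of the non-expansiveness of the projection is a detail the paper leaves implicit, but the route is identical.
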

	\begin{proof}
		From Proposition \ref{runaway}, we have \begin{align*}\|\theta_{n+1}^* - \theta_n^*\|&  \leq \frac{\beta}{\alpha}\| \theta_{n+1} - \theta_n \|\\ & \leq \frac{\beta}{\alpha}\| \eta \nabla f_n(\theta_n) \| \\
		& = \eta \frac{\beta}{\alpha} \|\nabla f_n(\theta_n) - \nabla f_n(\theta_n^*) \| \\ 
		& \leq \eta \frac{\beta \gamma}{\alpha} \|\theta_n - \theta_n^*\|,
		\end{align*}
		where the final equality uses Assumption \ref{relint} and the final inequality uses Assumption \ref{smoothness}. The result follows immediately by summing.
		\end{proof}

	\begin{lemma}\label{series_bound}
		Let $\rho = (1 - \alpha \eta + \gamma^2 \eta^2)^{1/2} > 0$. The following inequality holds for Imitation Gradient
		\begin{align*}
		\sum_{n = 1}^N \|\theta_n - \theta_n^*\| \leq \|\theta_1 - \theta_1^*\| + \sum_{n = 1}^N \rho \|\theta_n - \theta_n^*\| + V(\theta_{1:N}^*).
		\end{align*}
	\end{lemma}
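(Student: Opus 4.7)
The plan is to prove a one-step recursion of the form $\|\theta_{n+1}-\theta_{n+1}^*\|\le \rho\|\theta_n-\theta_n^*\| + \|\theta_n^*-\theta_{n+1}^*\|$ and then telescope. First I would apply the triangle inequality to split
\begin{equation*}
\|\theta_{n+1}-\theta_{n+1}^*\| \le \|\theta_{n+1}-\theta_n^*\| + \|\theta_n^*-\theta_{n+1}^*\|,
\end{equation*}
so the task reduces to controlling the first term, which is the distance between the one-step gradient iterate and the \emph{previous} minimizer.

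Next I would bound $\|\theta_{n+1}-\theta_n^*\|$ using the non-expansiveness of the projection $P_\Theta$ (note $\theta_n^* \in \Theta$ by Assumption \ref{relint}) together with the standard contraction analysis for strongly convex and smooth functions. Expanding
\begin{equation*}
\|\theta_n - \eta\nabla f_n(\theta_n) - \theta_n^*\|^2 = \|\theta_n-\theta_n^*\|^2 - 2\eta\langle\nabla f_n(\theta_n),\theta_n-\theta_n^*\rangle + \eta^2\|\nabla f_n(\theta_n)\|^2,
\end{equation*}
I would use $\nabla f_n(\theta_n^*)=0$ (Assumption \ref{relint}) to rewrite $\|\nabla f_n(\theta_n)\|=\|\nabla f_n(\theta_n)-\nabla f_n(\theta_n^*)\|\le\gamma\|\theta_n-\theta_n^*\|$ via smoothness (Assumption \ref{smoothness}), and strong convexity (Assumption \ref{convexity}) combined with Lemma \ref{lemma:stronglyconvex-simple} to obtain $\langle\nabla f_n(\theta_n),\theta_n-\theta_n^*\rangle \ge \tfrac{\alpha}{2}\|\theta_n-\theta_n^*\|^2$. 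This yields exactly $\|\theta_{n+1}-\theta_n^*\|^2 \le \rho^2 \|\theta_n-\theta_n^*\|^2$ with $\rho^2 = 1-\alpha\eta+\gamma^2\eta^2$, and hence the desired per-step recursion.

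Finally I would sum the recursion for $n=1,\ldots,N-1$: the left side becomes $\sum_{n=2}^N \|\theta_n-\theta_n^*\|$, the first term on the right becomes $\rho\sum_{n=1}^{N-1}\|\theta_n-\theta_n^*\|$, and the last term collapses to the path variation $V(\theta_{1:N}^*) = \sum_{n=1}^{N-1}\|\theta_n^*-\theta_{n+1}^*\|$ by Definition \ref{variation}. Adding $\|\theta_1-\theta_1^*\|$ to both sides completes the index to $n=1,\ldots,N$, and enlarging the summation range on the right (from $N-1$ to $N$) only loosens the bound, giving precisely the claimed inequality.

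The main technical subtlety is making sure the correct cross-term coefficient is obtained: naively using $\langle\nabla f_n(\theta_n),\theta_n-\theta_n^*\rangle \ge \alpha\|\theta_n-\theta_n^*\|^2$ would produce $\rho^2=1-2\alpha\eta+\gamma^2\eta^2$, so it is important to apply only the single strong-convexity inequality (dropping the nonnegative $f_n(\theta_n)-f_n(\theta_n^*)$ term instead of invoking Lemma \ref{lemma:stronglyconvex-simple} a second time) to match the stated $\rho$. Beyond that, the argument is a clean combination of projection non-expansiveness, the convex-optimization contraction, and telescoping.
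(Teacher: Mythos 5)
Your proposal is correct and follows essentially the same route as the paper: non-expansiveness of the projection, expansion of the squared update, the cross-term bound $\langle\nabla f_n(\theta_n),\theta_n-\theta_n^*\rangle\ge\tfrac{\alpha}{2}\|\theta_n-\theta_n^*\|^2$ from strong convexity (dropping the nonnegative loss gap), smoothness with $\nabla f_n(\theta_n^*)=0$ to get the contraction factor $\rho$, and then the triangle inequality plus summation to introduce the path variation. Your closing remark about which strong-convexity bound to use is exactly the choice the paper makes to arrive at $\rho^2=1-\alpha\eta+\gamma^2\eta^2$.
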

	\begin{proof} First note that $\rho$ is always non-negative because, for any positive choice of $\eta$, we have $\gamma \geq \alpha$ by definition. 
		By strong convexity we have the following: $
		0 \leq 2(f_n(\theta_n) - f_n(\theta_n^*)) 
		\leq 2 \langle \nabla f_n (\theta_n), \theta_n - \theta_n^* \rangle - \alpha \|\theta_n^* - \theta_n\|^2.
		$
		By the update rule given in the algorithm:
		\begin{align*}
		\| \theta_{n+1} - \theta_n^*\|^2 & \leq \|\theta_n - \eta \nabla f_n(\theta_n) - \theta_n^*\|^2 \\
		& = \|\eta \nabla f_n(\theta_n)\|^2 + \|\theta_n - \theta_n^*\|^2  \\
		& \quad - 2 \eta \langle\nabla f_n (\theta_n),  \theta_n - \theta_n^* \rangle. \numberthis \label{update_ineq}
		\end{align*}
		Rearranging the terms in (\ref{update_ineq}) and combining with the very first inequality, we arrive at the following:
		\begin{align*}
		\| \theta_{n+1} - \theta_n^*\|^2 \leq (1 - \alpha \eta) \|\theta_n - \theta_n^*\|^2 + \|\eta \nabla f_n(\theta_n)\|^2.
		\end{align*}
		Using Assumption \ref{smoothness}, the smoothness of $f_n$, and the fact that $\nabla f_n(\theta_n^*) = 0$:
		\begin{align*}
		\| \theta_{n+1} - \theta_n^*\|^2 & \leq \|\theta_n - \theta_n^*\|^2 - \alpha \eta \|\theta_n - \theta_n^*\|^2 \\
		& \quad + \eta^2 \|\nabla f_n(\theta_n) - \nabla f_n(\theta_n^*)\|^2 \\
		& \leq \left(1 - \alpha \eta + \gamma^2 \eta^2\right) \|\theta_n - \theta_n^*\|^2. \numberthis \label{next_bound}
		\end{align*}
		Let $\rho = \left(1 - \alpha\eta + \gamma^2 \eta^2 \right)^{1/2}$. Following from \cite{mokhtari2016online}, we consider the series
		\begin{align*}
		\sum_{n = 1}^N \|\theta_n - \theta_n^*\|
		& \leq \|\theta_1 - \theta_1^*\| + \sum_{n = 2}^N \| \theta_n - \theta_{n - 1}^* \| + V(\theta^*_{1:N}) \\
		& \leq \|\theta_1 - \theta_1^*\| + \sum_{n = 1}^{N} \rho \| \theta_{n} - \theta_{n}^* \| + V(\theta^*_{1:N}),
		\end{align*} 
		where the first line uses the triangle inequality and the definition of the path variation and the third line uses (\ref{next_bound}).
		\end{proof}

	\begin{proof}[Proof of Theorem \ref{ogd_theorem}]
		We begin by bounding the result from Lemma \ref{series_bound} above by Lemma \ref{path_var_bound}:
		\begin{align*}
		\sum_{n = 1}^N \|\theta_n - \theta_n^*\| & \leq \|\theta_1 - \theta_1^*\|  + \left(\rho + \eta \frac{\beta\gamma}{\alpha}\right) \sum_{n = 1}^N \|\theta_n - \theta_n^*\|.
		\end{align*}
		By rearranging the terms and bounding by the diameter of $\Theta$:
		\begin{align}\label{ogd-parameter-diff}
		\sum_{n = 1}^N \|\theta_n - \theta_n^*\|  \leq \frac{D}{1 - \rho - \eta \frac{\beta\gamma}{\alpha}}.
		\end{align}
		Choosing $\eta = \frac{\alpha(\alpha^2 - 2\beta \gamma)}{2\gamma^2(\alpha^2 - \beta^2)}$ ensures that $\left(1 - \rho - \eta \frac{\beta\gamma}{\alpha}\right)$ is positive. By the $G$-Lipschitz continuity of $f_n$, we have
		\begin{align*}
		\sum_{n = 1}^N f_n(\theta_n) - \sum_{n = 1}^N f_n(\theta_n^*)  \leq \frac{GD}{1 - \rho - \eta \frac{\beta\gamma}{\alpha}},
		\end{align*}
		and so $R_D(\theta_1, \ldots, \theta_N) = O(1)$. Therefore, $\frac{1}{N} R_D = O(1/N)$ which approaches zero.
		
		Interpreting the dynamic regret as a monotone partial sum, we have bounded $\sum_{n=1}^\infty f_n(\theta_n) - f_n(\theta_n^*)$ by a constant, meaning that the series converges by the monotone convergence theorem. Because the elements of the partial sum, $f_n(\theta_n) - f_n(\theta_n^*)$, are non-negative, they must then converge to zero.
		The proof of convergence of the parameters takes a similar approach.
		From the proof of Lemma \ref{path_var_bound}, we have 
		\begin{align*}
		\frac{1}{\eta\gamma}\| \theta_{n+1} - \theta_n \| \leq \| \theta_n - \theta_n^*\|,
		\end{align*}
		The series $\sum_{n = 1}^\infty  \| \theta_n - \theta_n^*\|$ converges due to (\ref{ogd-parameter-diff}). Thus the series $\sum_{n = 1}^\infty  \| \theta_{n+1} - \theta_n\|$ converges.
		Since the series of distances between consecutive elements converges, we have that the sequence $\{\theta_n\}_{n = 1}^\infty$ converges in the limit.
		\end{proof}

\subsection{Proof of Theorem~\ref{omgd_theorem}} For this proof, we again require two technical lemmas modified from \cite{zhang2017improved} which are proved in the appendix for completeness.
\begin{lemma}\label{lemma:decreasing-distance}
	For online multiple gradient descent, let $\theta'$ be the current parameter played by the algorithm at any iteration, $\hat \theta \define P_\Theta(\theta' - \eta \nabla f_n(\theta'))$. It holds that
	\begin{align*}
	\|\hat \theta - \theta_n^*\|^2 \leq \left(1 - \frac{2\alpha}{1/\eta + \alpha } \right) \|\theta' - \theta_n^* \|^2.
	\end{align*}
\end{lemma}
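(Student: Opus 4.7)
The plan is to use the standard one-step contraction argument for projected gradient descent on a strongly convex and smooth objective, tracing through carefully so that the resulting factor matches $1 - \frac{2\alpha}{1/\eta + \alpha}$ exactly. The projected update can be viewed as the minimizer of the quadratic surrogate
\[
Q_n(\theta) \define f_n(\theta') + \langle \nabla f_n(\theta'), \theta - \theta'\rangle + \frac{1}{2\eta}\|\theta - \theta'\|^2
\]
over $\Theta$. From the first-order optimality condition at $\hat\theta$, applied against the feasible point $\theta_n^*$, I get
\[
\langle \nabla f_n(\theta'), \hat\theta - \theta_n^*\rangle \leq \frac{1}{\eta}\langle \hat\theta - \theta', \theta_n^* - \hat\theta\rangle,
\]
and then expanding the right-hand side with the polarization identity turns this into a bound involving the three squared distances $\|\theta' - \theta_n^*\|^2$, $\|\hat\theta - \theta'\|^2$, and $\|\hat\theta - \theta_n^*\|^2$.

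Next, I would use the two sides of the convexity/smoothness sandwich of $f_n$ in the evaluation parameter. On the upper side, $\gamma$-smoothness (Assumption~\ref{smoothness}) together with the hypothesis $\eta \leq 1/\gamma$ from Theorem~\ref{omgd_theorem} gives
\[
f_n(\hat\theta) \leq f_n(\theta') + \langle \nabla f_n(\theta'), \hat\theta - \theta'\rangle + \frac{1}{2\eta}\|\hat\theta - \theta'\|^2.
\]
On the lower side, $\alpha$-strong convexity (Assumption~\ref{convexity}) gives
\[
\langle \nabla f_n(\theta'), \theta' - \theta_n^*\rangle \geq f_n(\theta') - f_n(\theta_n^*) + \frac{\alpha}{2}\|\theta' - \theta_n^*\|^2.
\]
I will split $\langle \nabla f_n(\theta'), \hat\theta - \theta'\rangle$ into $\langle \nabla f_n(\theta'), \hat\theta - \theta_n^*\rangle + \langle \nabla f_n(\theta'), \theta_n^* - \theta'\rangle$ and substitute both the optimality inequality and the strong-convexity lower bound into the smoothness upper bound. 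The $f_n(\theta')$ terms cancel, the $\|\hat\theta - \theta'\|^2$ terms cancel, and what remains is a clean inequality of the form
\[
f_n(\hat\theta) - f_n(\theta_n^*) \leq \left(\tfrac{1}{2\eta} - \tfrac{\alpha}{2}\right)\|\theta' - \theta_n^*\|^2 - \tfrac{1}{2\eta}\|\hat\theta - \theta_n^*\|^2.
\]

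Finally, I invoke Lemma~\ref{lemma:stronglyconvex-simple} (or equivalently Assumption~\ref{relint} with strong convexity) to get the matching lower bound $f_n(\hat\theta) - f_n(\theta_n^*) \geq \frac{\alpha}{2}\|\hat\theta - \theta_n^*\|^2$, combine it with the previous display, and collect the $\|\hat\theta - \theta_n^*\|^2$ terms on the left to obtain
\[
\left(\tfrac{1}{2\eta} + \tfrac{\alpha}{2}\right)\|\hat\theta - \theta_n^*\|^2 \leq \tfrac{1}{2}\left(\tfrac{1}{\eta} - \alpha\right)\|\theta' - \theta_n^*\|^2,
\]
which rearranges exactly to the claim. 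The main obstacle is bookkeeping: making sure each of the three inequalities (projection optimality, smoothness upper bound, strong-convexity lower bound) is used on the correct side and that the condition $\eta \leq 1/\gamma$ is invoked at precisely the step where smoothness is turned into a quadratic upper bound with coefficient $\tfrac{1}{2\eta}$, so that the $\|\hat\theta - \theta'\|^2$ terms cancel cleanly and no extra assumption is needed.
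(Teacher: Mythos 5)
Your proposal is correct and follows essentially the same route as the paper: both view the projected step as minimizing the quadratic surrogate $f_n(\theta') + \langle \nabla f_n(\theta'), \theta - \theta'\rangle + \frac{1}{2\eta}\|\theta - \theta'\|^2$ over $\Theta$, combine the resulting optimality inequality with the $\gamma$-smoothness upper bound (using $\eta < 1/\gamma$) and the $\alpha$-strong-convexity lower bound at $\theta_n^*$, and finish with $f_n(\hat\theta) - f_n(\theta_n^*) \geq \frac{\alpha}{2}\|\hat\theta - \theta_n^*\|^2$. Your use of first-order optimality plus the polarization identity is just an unpacked version of the paper's application of Lemma~\ref{lemma:stronglyconvex-simple} to the $\tfrac{1}{\eta}$-strongly convex surrogate, so the two arguments coincide.
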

\begin{lemma}[(\cite{zhang2017improved}, Theorem 3)]\label{zhang-theorem}
	If $\eta < 1/\gamma$ and $K = \lceil \frac{1/\eta + \alpha}{2\alpha}\log 4 \rceil$, then the following is true for online multiple gradient descent:
	\begin{align*}
	R_D(\theta_1, \ldots \theta_N) \leq 2\gamma S(\theta^*_{1:N}) + \gamma \|\theta_1 - \theta_1^*\|^2.
	\end{align*}\end{lemma}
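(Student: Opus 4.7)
The plan is to combine three ingredients: a smoothness-based bound that converts dynamic regret to squared distance to the per-round optimum, the contractive inner loop guaranteed by Lemma \ref{lemma:decreasing-distance}, and a telescoping of a linear recursion whose forcing term is exactly the squared path variation. First, by $\gamma$-smoothness and Assumption \ref{relint} (so $\nabla f_n(\theta_n^*) = 0$), each instantaneous regret satisfies
\begin{align*}
f_n(\theta_n) - f_n(\theta_n^*) \leq \tfrac{\gamma}{2} \|\theta_n - \theta_n^*\|^2,
\end{align*}
so it suffices to bound $\sum_{n=1}^N \|\theta_n - \theta_n^*\|^2$ by a constant times $\|\theta_1 - \theta_1^*\|^2 + S(\theta^*_{1:N})$.

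Next, I would iterate Lemma \ref{lemma:decreasing-distance} across the $K$ inner projected-gradient steps inside iteration $n$, which yields
\begin{align*}
\|\theta_{n+1} - \theta_n^*\|^2 \leq \left(1 - \tfrac{2\alpha}{1/\eta + \alpha}\right)^{K} \|\theta_n - \theta_n^*\|^2.
\end{align*}
Using $1 - x \leq e^{-x}$ with $x = 2\alpha/(1/\eta+\alpha) \in (0,1)$ (the latter because $\eta < 1/\gamma \leq 1/\alpha$), the prescribed choice $K = \lceil \frac{1/\eta + \alpha}{2\alpha}\log 4 \rceil$ is calibrated precisely so that the contraction factor is at most $1/4$. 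Hence $\|\theta_{n+1} - \theta_n^*\|^2 \leq \tfrac{1}{4}\|\theta_n - \theta_n^*\|^2$.

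To move from the stale minimizer $\theta_n^*$ to the fresh one $\theta_{n+1}^*$, I would apply the triangle inequality followed by $(a+b)^2 \leq 2a^2 + 2b^2$, obtaining
\begin{align*}
\|\theta_{n+1} - \theta_{n+1}^*\|^2 \leq 2\|\theta_{n+1} - \theta_n^*\|^2 + 2\|\theta_n^* - \theta_{n+1}^*\|^2 \leq \tfrac{1}{2}\|\theta_n - \theta_n^*\|^2 + 2\|\theta_n^* - \theta_{n+1}^*\|^2.
\end{align*}
Writing $a_n \define \|\theta_n - \theta_n^*\|^2$, this is the linear recursion $a_{n+1} \leq \tfrac{1}{2} a_n + 2\|\theta_n^* - \theta_{n+1}^*\|^2$. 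Summing from $n = 1$ to $N$ gives $\sum_{n=1}^N a_n \leq 2 a_1 + 4 S(\theta^*_{1:N})$, and plugging this into the smoothness bound yields the claimed inequality with the stated constants.

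The main obstacle, and the place where the constants in the hypotheses are doing real work, is the interplay between the Young's-inequality factor of $2$ in the last step and the contraction rate from the inner loop. To make the recursion telescoping-summable, the product of these two factors must be strictly less than one; $\log 4$ in the definition of $K$ is exactly what is needed to land at $1/2$ rather than $1$. A weaker inner contraction (e.g., using $\log 2$) would leave the coefficient on $a_n$ equal to $1$ and the series would only be bounded by the initial condition plus a still-summable path variation, but with no geometric decay — the factor $\log 4$ is the minimal choice that lets both halves of the argument close.
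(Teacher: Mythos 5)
Your proposal is correct and follows essentially the same route as the paper's proof: iterate Lemma \ref{lemma:decreasing-distance} $K$ times to get the $1/4$ contraction, use $(a+b)^2 \le 2a^2+2b^2$ to pass from $\theta_n^*$ to $\theta_{n+1}^*$ and telescope the resulting recursion to $\sum_n \|\theta_n-\theta_n^*\|^2 \le 2\|\theta_1-\theta_1^*\|^2 + 4S(\theta^*_{1:N})$, then convert to regret via $\gamma$-smoothness. The only difference is presentational (you phrase the summation step as a linear recursion rather than manipulating the sum directly), and your closing remark about why $\log 4$ rather than $\log 2$ is needed correctly identifies where the constants interact.
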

\begin{proof}[Proof of Theorem \ref{omgd_theorem}] We begin by upper bounding the squared path variation using Proposition \ref{runaway} and Assumption \ref{regularity}:

\begin{align*}
\|\theta^*_{n+1} - \theta_n^*\| & \leq \frac{\beta}{\alpha}\| \theta_{n+1} - \theta_n \| \\
& \leq \frac{\beta\eta}{\alpha}\| \nabla f_n(\theta_n^1) + \ldots + \nabla f_n(\theta_n^K) \| \\
& \leq \frac{\beta\eta}{\alpha} \sum_{j = 1}^K \|\nabla f_n(\theta_n^j)\| \\
& = \frac{\beta\eta}{\alpha} \sum_{j = 1}^K \|\nabla f_n(\theta_n^j) - \nabla f_n(\theta_n^*)\|
\end{align*}
Next we apply Lemma \ref{lemma:decreasing-distance}:
\begin{align*}
\|\theta^*_{n+1} - \theta_n^*\| & \leq \frac{\beta\eta\gamma }{\alpha} \sum_{j = 1}^K \|\theta_n^j - \theta_n^*\| \\
& \leq \frac{\beta\eta\gamma }{\alpha} \|\theta_n - \theta_n^*\| \sum_{j = 1}^K \left(1 - \frac{2\alpha}{1/\eta + \alpha}\right)^j \\
& \leq \frac{\beta\eta\gamma K}{\alpha} \|\theta_n - \theta_n^*\|
\end{align*}
Summing over all $n$, we have
\begin{align*}
S(\theta^*_{1:N}) \leq \left(\frac{\beta \eta \gamma K}{\alpha}\right)^2 \sum_{n = 1}^N \|\theta_n - \theta_n^*\|^2
\end{align*}
Then by substituting into Lemma \ref{zhang-theorem} and bounding the quantity from above using the strong convexity of $f_n$, we have
\begin{align*}
R_D & \leq \gamma \|\theta_1 - \theta_1^*\|^2 + \frac{4\beta^2 \eta^2 \gamma^3 K^2}{\alpha^3} \sum_{n = 1}^N f_n(\theta_n) - f_n(\theta_n^*) \\
%\gamma \|\theta_1 - \theta_1^*\|^2 +  \frac{2\beta^2 \eta^2 \gamma^3 K^2}{\alpha^2} \sum_{n = 1}^N \|\theta_n - \theta_n^*\|^2 \\
%& \leq \gamma \|\theta_1 - \theta_1^*\|^2 + \frac{4\beta^2 \eta^2 \gamma^3 K^2}{\alpha^3} \sum_{n = 1}^N f_n(\theta_n) - f_n(\theta_n^*) \\
& \leq \frac{\gamma \|\theta_1 - \theta_1^*\|^2}{1 - \frac{4\beta^2 \eta^2 \gamma^3 K^2}{\alpha^3}} \leq \frac{\gamma D^2}{1 - \frac{4\beta^2 \eta^2 \gamma^3 K^2}{\alpha^3}}.
\end{align*}
For $\eta < \min\left\{1/\gamma, \frac{\alpha^{5/2} - \gamma^{3/2} \beta \log 4}{2\gamma^{3/2}\alpha \beta \log 4}\right\}$ and $K > \frac{1/\eta + \alpha}{2\alpha}\log 4$, the denominator is positive. So the regret is again $O(1)$.

The proof of convergence of  $f_n(\theta_n) - f_n(\theta_n^*)$ to zero is identical to that of Theorem $\ref{ogd_theorem}$. The proof of convergence of $\{\theta_n\}_{n = 1}^\infty$ is similar. An intermediate result of Lemma \ref{zhang-theorem} yields
\begin{align}
\|\theta_{n+1} - \theta_n^*\| \leq \frac{1}{2}\|\theta_n - \theta_n^*\|.
\end{align}
Then, as before, we bound the sum of the differences between policy parameters and optima, but without the square:
\begin{align*}
\sum_{n = 1}^N \|\theta_{n} - \theta_n^*\| \leq 2\|\theta_1 - \theta_1^*\| + 2V(\theta^*_{1:N})
\end{align*}
By bounding the path variation from above using Lemma \ref{lemma:decreasing-distance} and again rearranging terms, a constant upper bound on $\sum_{n = 1}^N \|\theta_{n} - \theta_n^*\|$ is established in the same way as Theorem \ref{ogd_theorem} except we require $1 > \frac{2\beta \eta\gamma K}{\alpha}$. This is satisfied with the same condition on $\eta$. The rest of proof proceeds exactly in the same way as Theorem \ref{ogd_theorem}. Therefore with Multiple Imitation Gradient, $\{\theta_n\}_{n=1}^N$ converges in the limit. \end{proof}

\section{Systems with Distribution Continuity}\label{sec:examples}

Fundamentally, the preceding results for on-policy imitation learning and core idea behind continuous online learning rely on the supposition that there is Lipschitz continuity in $\nabla f_{\theta'}(\theta)$ with respect to $\theta'$, the distribution-generating parameter. However, it may not be immediately clear when this condition is satisfied in scenarios of interest. In this section, we discuss several examples of well-studied MDPs that satisfy this condition.

\subsection{Linear Dynamical Systems}
We can show that the ridge regression loss on a linear dynamical system satisfies Assumption \ref{regularity}. We consider the following system:
\begin{align*}
s_{t+1} = A s_t + Bu_t,
\end{align*}
where $A \in \R^{m \times m}$ and $B \in \R^{m\times 1}$ and the fixed initial state is $s_1 \in \R^m$.
The policies give linear feedback with $u_t = \theta^\top s_t$ with $\theta \in \R^{m}$, so the closed loop dynamics are
\begin{align*}
s_{t+1} = (A  + B \theta^\top) s_t.
\end{align*}
We consider a supervisor policy that is affine in the state $Ks_t + k$. Over a full trajectory, the loss function with distribution-generating parameter $\theta'$ and evaluation parameter $\theta$ is  
\begin{align*}
f_{\theta'}(\theta) = \left\| \begin{bmatrix}
s_1^\top \\ \vdots \\ s_{T -1 }^\top
\end{bmatrix} \theta -  \begin{bmatrix}(Ks_1 + k)^\top \\ \vdots \\
(Ks_{T - 1} + k)^\top
\end{bmatrix} \right\|^2 + \frac{\alpha}{2}\|\theta\|^2
\end{align*}
where $s_t = (A + B (\theta')^\top)^{t - 1}s_1$. The derivative is then 
\begin{align*}
\nabla f_{\theta'}(\theta) = \begin{bmatrix}
s_1s_1^\top \\ \vdots \\ s_{T- 1} s_{T -1 }^\top
\end{bmatrix} \theta -  \begin{bmatrix}s_1(Ks_1 + k)^\top \\ \vdots \\
s_{T - 1}(Ks_{T - 1} + k)^\top
\end{bmatrix}  + \alpha\theta
\end{align*} 
Because of the definition of $s_t$, we can see that $\nabla f_{\theta'}( \theta)$ is a polynomial map in $\theta'$. Therefore, on the compact set $\Theta$, its derivative is continuous and bounded, so $\nabla f_{\theta'}(\theta)$ is Lipschitz continuous.

The supervisor policy can be polynomial in the state and the result is the same due to composition of polynomials. We can also extend this to multiple input systems but, notationally, this requires redefining $\Theta$ as a subset of matrices.

\subsection{Lipschitz Markov Decision Processes}

Another problem class that satisfies the conditions of Assumption~\ref{regularity} is ``Lipschitz MDPs" \citep{bertsekas1975convergence,hinderer2005lipschitz,pirotta2015policy,asadi2018lipschitz}. This category of Markov decision processes studied in reinforcement learning theory is restricted to exhibit smoothness in the transition dyanmics and the reward. 
Before discussing in detail the formal Lipschitz MDP setting, we first return to an intuitive example that illustrates this smoothness of dynamics from Section~\ref{sec:dyn}.

The left-hand side of the inequality in Assumption \ref{regularity}, can be expanded as
\begin{align*}
\| \nabla f_{\theta_1}(\theta) - \nabla f_{\theta_2}(\theta) \| & = \| \int  \nabla J_{\tau}(\theta) ( p(\tau ; \pi_{\theta_1}) - p(\tau ; \pi_{\theta_2}) ) d\tau  \| \\
& \leq \sup_\tau \ \| \nabla J_\tau(\theta)\| \int | p(\tau ; \pi_{\theta_1}) - p(\tau ; \pi_{\theta_2}) | d\tau \\
& = \sup_\tau \ 2 \| \nabla J_\tau(\theta)\| d_{TV} \left(p( \tau ; \pi_{\theta_1}), p( \tau ; \pi_{\theta_2})\right)
\end{align*}
where $d_{TV}(p, q)$ denotes the total variation between distributions $p$ and $q$. If $\sup_\tau \| \nabla J_\tau(\theta)\|$ is finite, then Assumption \ref{regularity} depends only on the condition that small changes in the policy parameters induce small changes in the trajectory distributions with respect to total variation distance. That is, if $p( \tau | \pi_{\theta_2})$ is $L_{TV}$-Lipschitz
\begin{align*}
d_{TV} \left(p( \tau | \pi_{\theta_1}), p( \tau | \pi_{\theta_2})\right) \leq L_{TV} \| \theta_1 - \theta_2\|,
\end{align*}
then Assumption \ref{regularity} is satisfied with 
%$\beta = \sup_\tau \| \nabla J_\tau(\theta)\| L_{TV}$.
\begin{align*}\beta = \sup_\tau \ 2\| \nabla J_\tau(\theta)\| L_{TV}.
\end{align*}
%\begin{align*}
%F(\theta) = \argmin_{\theta'} f_\theta(\theta') \\
%\|F(\theta) - F(\hat \theta)\| \leq \lambda \| \theta - \hat \theta \| \\
%\blue{\theta^\star} = F(\blue{\theta^\star})
%\end{align*}

This condition is similar to differentiability assumptions in reinforcement learning \citep{schulman2015trust}. While it conveys intuition, the Lipschitz continuity of the trajectory distribution in total variation distance is a higher level assumption than what is typically considered in Lipschitz MDPs. However, it can be derived from lower level assumptions. In our discussion of this setting, we will follow the assumptions of \cite{pirotta2015policy}, who consider a infinite horizon ($T \rightarrow \infty$) and $\gamma$-discounted version of the MDP $\M$ for $\gamma \in (0, 1)$.

The Lipschitz MDP has regularity in the transition dynamics $p(\cdot | s, u)$ and the reward function. In the imitation learning setting, we do not assume access to a reward function but we can view the loss $\ell(s, \theta)$ in its place.
Let $d_{\S, \U}((s, u), (s', u')) = d_\S(s, s') + d_\U(u, u')$ denote the sum of distance metrics on the state set $\S$ and control set $\U$. Lipschitz MDPs assume there exists $L_p \geq 0$ such that 
%\begin{align*}
%\|\nabla \ell(u, u^*) - \nabla \ell(u', u^*)\| & \leq L_\ell d_{\U} (u, u'),
%\end{align*}
%and
\begin{align*}
\K(p(\cdot | s, u), p(\cdot | s', u')) & \leq L_p d_{\S, \U} ((s, u), (s', u')).
\end{align*}
where $\K$ is the Kantorovich distance,  defined for probability measures $p$ and $q$ as
\begin{align*}
\K(p, q) \define  \sup \left\{ \int f(x) (dp - dq) \ : \ f \text{ is 1-Lipschitz cts.}  \right\}
\end{align*}
Due to its generality, the Kantorovich distance (a specific case of the Wasserstein distance) is used extensively in literature on Lipschitz MDPs in place of the total variation, but the choice of distribution distance does not ultimately affect Assumption \ref{regularity}.
In addition, we assume that we have chosen a loss function $\ell$ such that the partial derivatives with respect to $\theta$ are $L_\ell$-Lipschitz:
\begin{align*}
\left| \frac{\partial \ell(s, \theta)}{\partial \theta_i} -  \frac{\partial \ell(s, \theta')}{\partial \theta_i'} \right| \leq L_\ell\|\theta - \theta'\|
\end{align*}

We consider an alternative, but equivalent form of the imitation learning objective. Rather than defining a trajectory distribution, we define an average state distribution $\delta_\theta$ under $\pi_\theta$ such that 
%$J_\tau(\theta) = \E_{(u, u^*, s) \sim \delta_\pi} \ell(u, u^*)$. 
$\E_{\tau\sim p(\tau | \theta)}J_\tau(\theta) = \E_{s \sim \delta_\theta} \ell(s, \theta)$. This is a standard reformulation \citep{ross2010reduction}.
Lipschitz continuity of $\delta_\theta$ in the Kantorovich distance was shown by \cite{pirotta2015policy}.
\begin{lemma}[\citep{pirotta2015policy}] 
	If $\pi_\theta(\cdot | s)$ is $L_\S$-Lipschitz in $s \in \S$ and Lipschitz in $\theta \in \Theta$ and $\gamma L_p (1 + L_\S ) < 1$, then there exists a finite $L_\delta > 0$ such that
	\begin{align*}
	\K(\delta_{\theta_1}, \delta_{\theta_2}) \leq L_\delta \| \theta_1 - \theta_2\|.
	\end{align*}
\end{lemma}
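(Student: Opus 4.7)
The plan is to exploit the fact that the discounted state distribution $\delta_\theta$ is the unique fixed point of a distributional Bellman-like operator, and then bound $\K(\delta_{\theta_1}, \delta_{\theta_2})$ by combining a contraction step with a perturbation step, much in the spirit of Banach's fixed-point argument applied parametrically. Concretely, writing the (discounted) state distribution recursion
\begin{equation*}
\delta_\theta(\cdot) = (1-\gamma)\mu(\cdot) + \gamma\, (T_\theta^* \delta_\theta)(\cdot),
\end{equation*}
where $T_\theta^*$ is the forward operator induced by the closed-loop transition kernel $P_\theta(s'\mid s) = \int \pi_\theta(u\mid s)\, p(s'\mid s, u)\, du$, I would work entirely in the space of probability measures equipped with the Kantorovich metric $\K$.

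The first step is to show that the closed-loop kernel $P_\theta$ inherits Lipschitz continuity from its ingredients: in the state argument it is $L_p(1+L_\S)$-Lipschitz, i.e., $\K(P_\theta(\cdot\mid s), P_\theta(\cdot\mid s')) \leq L_p(1+L_\S)\, d_\S(s, s')$, which one obtains by conditioning on the control, using the $L_p$-Lipschitz continuity of $p(\cdot\mid s, u)$, and adding the contribution from $\pi_\theta(\cdot\mid s)$ being $L_\S$-Lipschitz in $s$; and in the parameter argument, $\K(P_{\theta_1}(\cdot\mid s), P_{\theta_2}(\cdot\mid s)) \leq L_p L_\pi \|\theta_1 - \theta_2\|$ using the Lipschitz continuity of $\pi_\theta$ in $\theta$. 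Dualising through the Kantorovich--Rubinstein formulation, these two bounds translate directly into statements about $T_\theta^*$ acting on measures.

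Combining these, the operator $\nu \mapsto (1-\gamma)\mu + \gamma T_\theta^* \nu$ is a contraction of modulus $\gamma L_p(1+L_\S) < 1$ on the space of probability measures under $\K$, and its unique fixed point is $\delta_\theta$. The standard telescoping bound gives
\begin{align*}
\K(\delta_{\theta_1}, \delta_{\theta_2})
&\leq \K(T_{\theta_1}^* \delta_{\theta_1}, T_{\theta_1}^* \delta_{\theta_2}) + \K(T_{\theta_1}^* \delta_{\theta_2}, T_{\theta_2}^* \delta_{\theta_2}) \\
&\leq \gamma L_p(1+L_\S)\, \K(\delta_{\theta_1}, \delta_{\theta_2}) + \gamma L_p L_\pi \|\theta_1 - \theta_2\|,
\end{align*}
where the second term uses the parameter-Lipschitzness of $P_\theta$ integrated against a fixed measure. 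Rearranging yields the claim with $L_\delta = \gamma L_p L_\pi / (1 - \gamma L_p(1+L_\S))$, which is finite precisely under the hypothesis $\gamma L_p(1+L_\S) < 1$.

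The main obstacle I anticipate is the first step: carefully justifying the Lipschitz bounds on $P_\theta$ in the Kantorovich metric when the randomness in $s'$ arises from a composition of two kernels ($\pi_\theta$ and $p$). The cleanest route is to exhibit an explicit coupling of $P_\theta(\cdot\mid s)$ and $P_\theta(\cdot\mid s')$ by first coupling controls via $\pi_\theta$ and then coupling next-states via $p$, so that the total transport cost decomposes additively into a control-variation term controlled by $L_\S$ and a state-variation term contributing the $+1$, both amplified by $L_p$. The analogous coupling for the parameter-perturbation bound is more subtle because $\pi_{\theta_1}$ and $\pi_{\theta_2}$ act on the same state, and one must be careful to use the dual form of $\K$ rather than total variation to avoid losing a factor and to match the hypotheses of the lemma.
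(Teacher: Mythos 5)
The paper does not actually prove this lemma---it is imported from \cite{pirotta2015policy} as a black box---and your contraction/fixed-point argument on the recursion $\delta_\theta = (1-\gamma)\mu + \gamma T_\theta^* \delta_\theta$ is correct and is essentially the argument used in that reference: the closed-loop kernel is $L_p(1+L_\S)$-Lipschitz in $s$ and $L_p L_\pi$-Lipschitz in $\theta$ under $\K$ (both obtained by the coupling/dual decomposition you describe), the hypothesis $\gamma L_p(1+L_\S)<1$ makes the Bellman flow operator a $\K$-contraction on the (complete, since $\S$ is bounded) space of probability measures, and the telescoping perturbation bound gives $L_\delta = \gamma L_p L_\pi/(1-\gamma L_p(1+L_\S))$. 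The only detail to pin down is that ``Lipschitz in $\theta$'' in the hypothesis must be read as $\K(\pi_{\theta_1}(\cdot\mid s),\pi_{\theta_2}(\cdot\mid s))\le L_\pi\|\theta_1-\theta_2\|$ uniformly in $s$, which is exactly what your second coupling step uses.
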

Then we can show that Assumption \ref{regularity} is satisfied when the conditions of the previous lemma are satisfied:
\begin{align*}
\| \nabla f_{\theta_1}(\theta) - \nabla f_{\theta_2}(\theta) \| & = \| \int \nabla \ell(s, \theta) (d\delta_{\theta_1} - d\delta_{\theta_2})  \|  \\
& \leq \sum_{i = 1}^d \left| \int  \frac{\partial \ell}{\partial \theta_i} (d\delta_{\theta_1} - d\delta_{\theta_2}) \right| \\
& \leq d L_\ell \K( \delta_{\theta_1}, \delta_{\theta_2} ) \\
& \leq d L_\ell L_\delta \| \theta_1 - \theta_2\|.
\end{align*}
In this case, we observe that $\beta = d L_\ell L_\delta$. This upper bound is fairly loose and we conjecture that the additional dimension dependence is more pessimistic than necessary.

\section{Adaptive On-Policy Regularization}

We now apply these theoretical results to motivate an adaptive regularization algorithm to help ensure convergence. As noted by \cite{cheng2018convergence}, regularization can lead to convergent policies for DAgger.

\begin{figure}

	\begin{tabular}[t]{@{} p{.5\linewidth} @{}}
		\begin{algorithm}[H]
			\caption{Adaptive On-Policy Regularization (\textsc{Aor})}
			\label{alg:aopr}
			\begin{algorithmic}
				\STATE {\bfseries Input:} Initial parameters $\theta_1$,
				\STATE ~~~~~~~~~~ Max iterations $N$,
				\STATE ~~~~~~~~~~ Initial regularization $\hat \alpha_1$.
				\FOR{$n=1$ {\bfseries to} $N-1$}
				\STATE Roll out $\theta_n$ and collect trajectory $\tau_n$.
				\STATE Observe loss $f_n(\theta) = f_{\theta_n}(\theta)$ from feedback on $\tau_n$.
				\STATE $\theta_{n+1} \gets \textsc{Update}(\theta_n, \hat \alpha_n)$.
				\STATE $\hat \lambda_n \gets \textsc{Estimate}(f_1, \ldots, f_n)$.
				\STATE $\hat \alpha_{n+1} \gets \textsc{Tune}(\hat \alpha_n, \hat \lambda_n)$.
				
				% 		t\hat \lambda_n \hat \alpha_n + (1 - t) \hat \alpha_n$.
				\ENDFOR
			\end{algorithmic}
		\end{algorithm}
	\end{tabular}

%	\vspace{.5em}
	\caption*{Adaptive On-Policy Regularization adaptively increases $\alpha$, the regularization parameter for strong convexity, to stabilize any regularized on-policy algorithm.}
%	\vspace{-2.5em}
\end{figure}

%In DAgger, Imitation Gradient, and Multiple Imitation Gradient, 
In all the theoretical results,
a key sufficient condition is that $\lambda < 1$, meaning that the strong convexity constant $\alpha$ must be greater than the regularity constant $\beta$. While $\beta$ is a fixed property of the dynamics, $\alpha$ is largely controllable by the user and robot. $\alpha$ can be increased by increasing the regularization of the supervised learning loss. By Proposition \ref{runaway}, a lower bound on $\lambda$ may be estimated by finding the ratio of the distance between optimal parameters and the distance between policy parameters: \begin{align*}\hat \lambda = \frac{\| \theta^*_{n+1}  - \theta^*_n \|}{\| \theta_{n+1} - \theta_n\|}.\end{align*}

\begin{figure}[h]
	%	\vspace{-1.5em}
	\centering
	\includegraphics[width=4in]{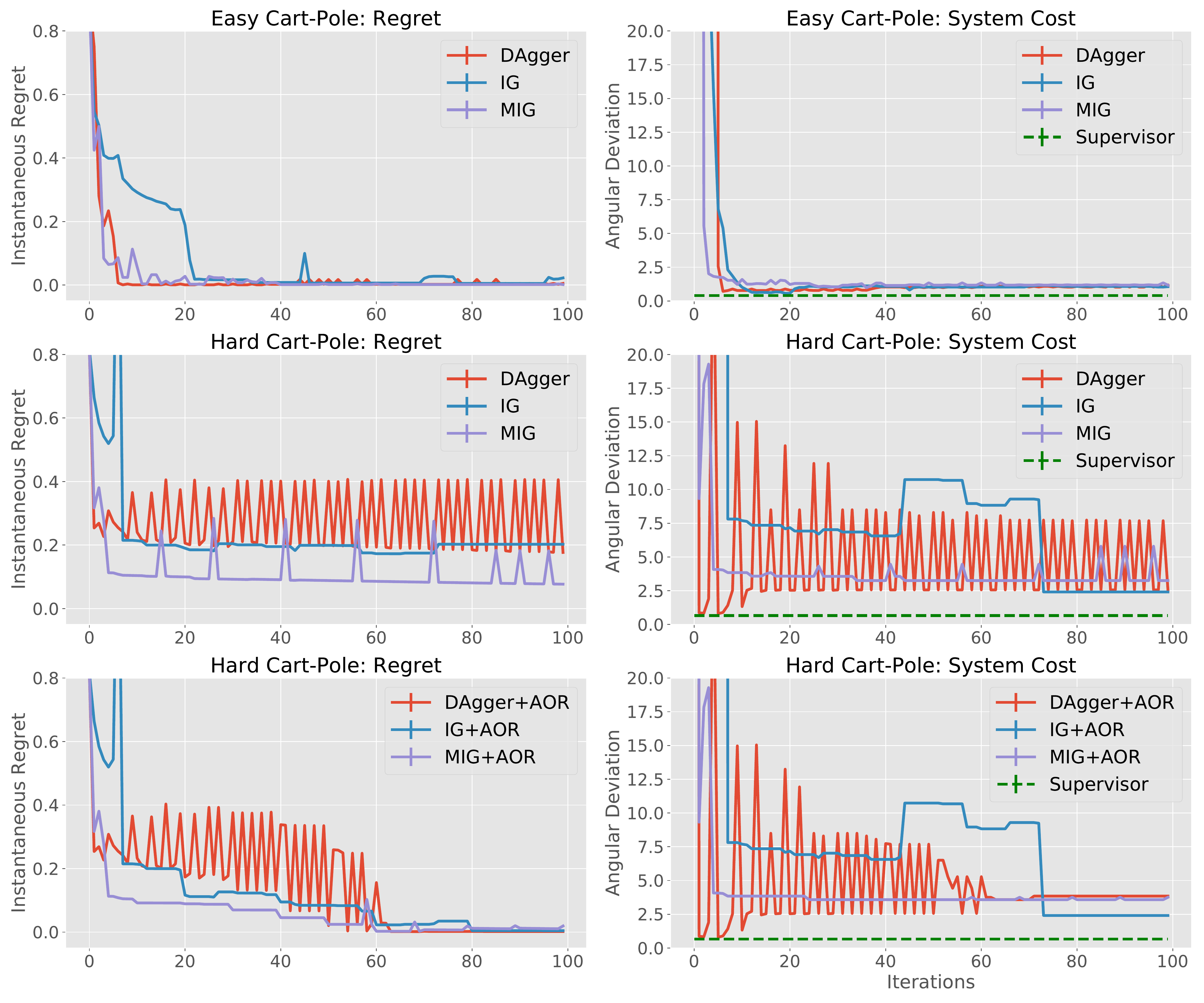}
	\caption{ For the cart-pole balancing task, regret is shown in the right column while true system cost, measured as angular deviation, is shown in the left column. All three on-policy algorithms without adaptive regularization are shown on both the easy (top row) and hard (middle, bottom rows) settings of cart-pole. The result shows existence of a system is difficult enough to induce unstable learning curves. The bottom row shows the algorithms again on the hard setting but using \textsc{Aor}. The dynamic regret tends towards zero and the chattering in DAgger and MIG is reduced.}
	\label{fig:cartpole}
	%	\vspace{-1.5em}
\end{figure}

Therefore, we can propose an adaptive algorithm to compute a new regularization term at each iteration $n$. One caveat of adaptively updating $\alpha$ is that we do not want it to be too large. While the regret will converge, the policy performance can suffer as the regularization term will dominate the loss function and thus the convergent solution will simply be the solution that minimizes the regularizer. This subtlety and the theoretical motivation in the previous sections are the basis for Algorithm \ref{alg:aopr}, which we call Adaptive On-Policy Regularization, an algorithm for making conservative updates to $\alpha$ that can be applied to any regularized on-policy algorithm. At each iteration, the policy is updated according to a given on-policy algorithm such as DAgger using subroutine $\textsc{Update}(\theta_n, \hat \alpha_n)$ which depends on the current regularization. Then $\hat \lambda_n$ is computed with subroutine $\textsc{Estimate}$. We use a mean of observed $\lambda$ values over iterations. Finally, the $\hat \alpha_{n+1}$ is tuned based on $\hat \lambda_{n}$. We use a linear weighting update rule: $\hat \alpha_{n+1} = t \hat \lambda_n \hat \alpha_n + (1 - t)\hat \alpha_n$ for $t \in (0, 1)$, where $\hat \lambda_n \hat \alpha_n$ is a conservative estimate of $\beta$.

\section{Empirical Evaluation}
We study the empirical properties of DAgger, Imitation Gradient (IG) and Multiple Imitation Gradient (MIG), showing that even in low dimensional and convex settings, the implications of  the convergence of policies become apparent. We intentionally sought out cases and chose parameters such that these on-policy algorithms do not achieve convergence in order to better understand their properties. We consider deterministic domains for the sake of accurately measuring the true dynamic regret. In this evaluation, we attempt to address the following questions: (1) How are policy performance and dynamic regret affected by changing system parameters? (2) Can Adaptive On-Policy Regularization improve convergence of the average dynamic regret and policy performance? The code for all experiments can be found at \texttt{https://github.com/jon--lee/aor}.

\subsection{Cart-Pole Balancing}

First we consider a task where the robot learns to push a cart left or right with a fixed force magnitude in order to balance a pole upright over 100 iterations. The control space is discrete $\left\{\text{left}, \text{right}\right\}$ and the state space consists of cart location and velocity and pole angle and angular velocity. We measure the absolute performance of a policy as the angular deviation from the upright position. We obtained a nonlinear algorithmic supervisor via reinforcement learning. The robot's policy was learned using $l_2$-regularized linear regression as in (\ref{ridge-regression}). In this setting, we vary the difficulty of the problem, i.e. controlling $\beta$, by setting the force magnitude to either low or high values corresponding to easy and hard settings, respectively. For all algorithms, the regularization was initially set to $\hat \alpha_1 = 0.1$. Stepsizes $\eta$ for IG and MIG were set to $0.0001$ and $0.01$, respectively. %Further details can be found in the appendix.

\begin{figure}
	\centering
	\includegraphics[width=4in]{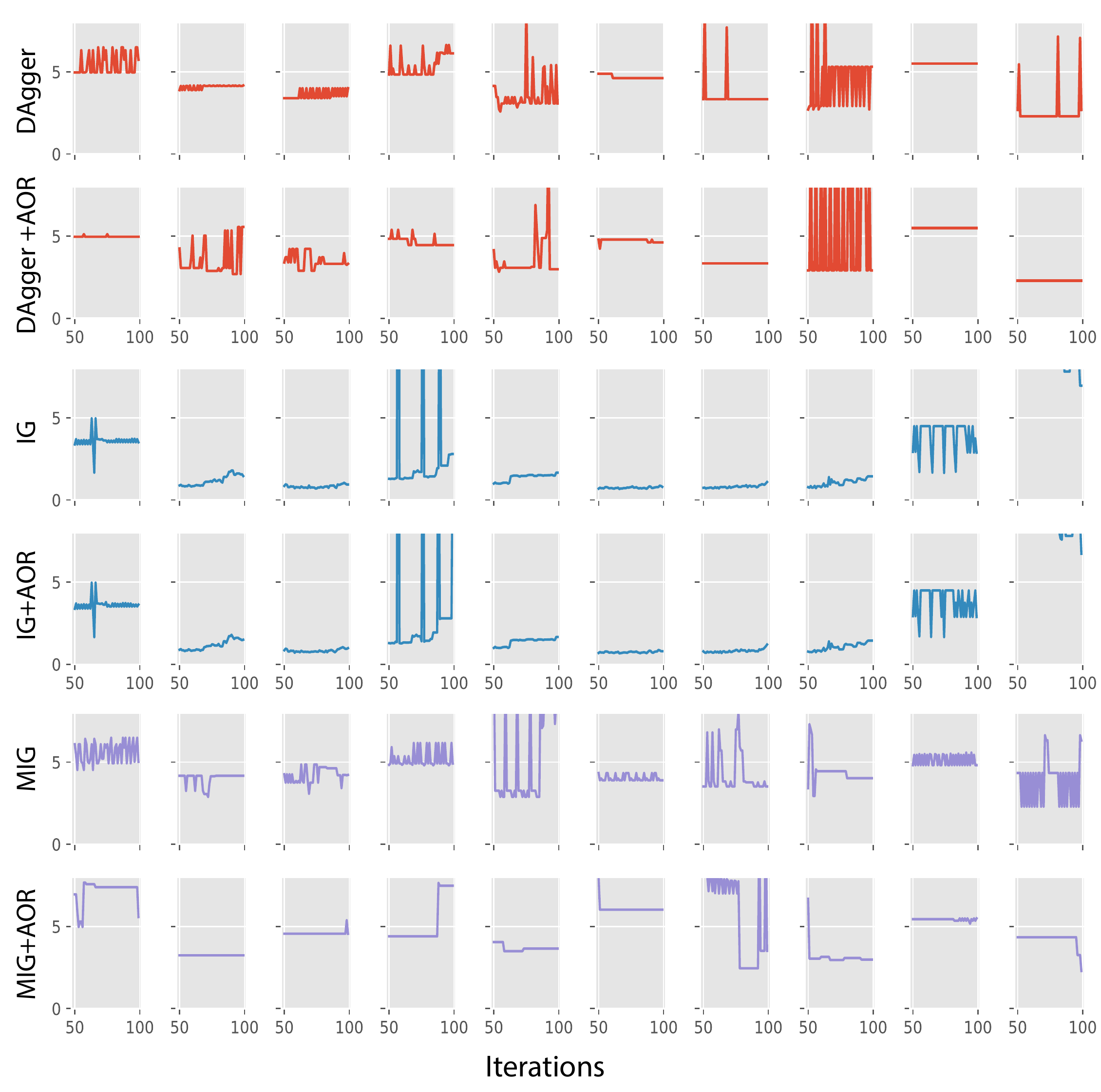}	
	\caption{The cart-pole experiment was repeated over 10 different initial pole angle conditions for each of the algorithms with and without \textsc{Aor}. The cost curves, measured in angular deviation, are shown. Columns correspond to different trials with different initial conditions. Each color group represents a different base algorithm. The top row of each color group is the algorithm without adaptive regularization. The bottom rows utilize adaptive regularization. \textsc{Aor} is generally able to stabilize the cost curves while maintaining low cost.}
	\label{fig:trials}
\end{figure}

To generate the easy and hard versions of cart-pole balancing, we varied the parameter controlling the force magnitude applied with each left or right control. A low force magnitude of $2.0$ was used for the easy setting and a higher force magnitude of $10.0$ was used for the hard setting. The value $2.0$ was the smallest integer before the force was too low to control the cart. The value $10.0$ was one of the highest before we noticed the average $\lambda$ values began to decrease as a function of the force magnitude. As mentioned, the parameters $\eta$ and $\alpha_1$ where intentionally chosen so that the task would exhibit unstable or suboptimal results. Trajectories were 200 time steps long.

In order to estimate $\hat \lambda_n$ between each iteration, we compute $\|\theta_{n} - \theta_{n-1}\|$ directly since both quantities are known. Because each $f_n$ are strongly convex supervised learning problems, full information is known $f_n$ and so the minimum $\theta^*_n = \argmin f_n(\theta)$ can be solved. In this case, $f_n$ corresponded to a $l_2$-regularized ridge regression problem which has a closed form solution. In the case of stochastic problems, it may be necessary to obtain a sample estimate of $f_n$ first by collecting several trajectories per iteration and then estimating $\theta_n^*$ from the sample average. Our estimate of $\hat \lambda_n$ was simply the ratio of these normed differences averaged over iterations. Note that $\lambda$ as defined in Assumption \ref{regularity}, is a global constant, but in practice only local regions may be relevant for the problem at hand, which is why we estimate $\lambda$ at each iteration.

The instantaneous regrets for all three imitation learning algorithms, measured as $f_n(\theta_n) - \min_\theta f_n(\theta)$, are shown in the left column of the top and middle rows of Figure \ref{fig:cartpole} for both the easy and hard settings, respectively. In the right column are the actual angular deviations of the pole. 
%Because the convergence of static regret is guaranteed regardless of the difficulty, we omit it for clarity. 
In the easy setting, the algorithms converge to costs similar to the supervisor. In the hard setting, there is chattering during the learning process. We observe that regret does not always converge indicating a discrepancy between the supervisor and the robot, which is consistent with the theoretical results that suggest difficulty influences convergence to the best policy.

The bottom row of Figure \ref{fig:cartpole} shows that Adaptive On-Policy Regularization (\textsc{Aor}) can be used to improve dynamic regret. In our implementation of \textsc{Aor}, we set $t = 0.01$ and updated $\hat \lambda_n$ and $\hat \alpha_n$ every ten iterations. For IG and MIG with \textsc{Aor}, we adjusted the stepsize $\eta$ as a function of $\hat \alpha_n$ as motivated by the conditions in the theorems. In practice this counteracts potentially large gradients caused by large $\hat \alpha$ values. DAgger without \textsc{Aor} exhibits severe chattering. With \textsc{Aor}, the performance is stabilized, leading to a converged policy. A similar result is observed for MIG. We note that IG did not initially exhibit chattering and the learning curve was unaffected. We attribute this to the discrete nature of the control space; even if the policy parameters have different regret rates, the resulting trajectories could be the same. %We also evaluated the same task over several different initial conditions. Since averaging the results tends to hide the chattering, individual plots are give in the appendix.

We repeated the cart-pole experiment over 10 different initial pole angle conditions in the hard setting in Figure \ref{fig:trials}. The cost curves are shown, measured as angular deviation from the upright position. The top row of each pair shows without \textsc{Aor} and the bottom row shows with \textsc{Aor}. Each column corresponds to a different set of initial conditions. To conserve space, only the last 50 iterations are shown, which is when the curves are typically stabilized by adaptive regularization. DAgger and MIG see a reduction of chattering in most cases when using \textsc{Aor}, while we observe no difference for IG. Interestingly, IG is fairly stable by itself.

\subsection{Walker Locomotion}

\begin{figure}
	\centering
	\includegraphics[width=4in]{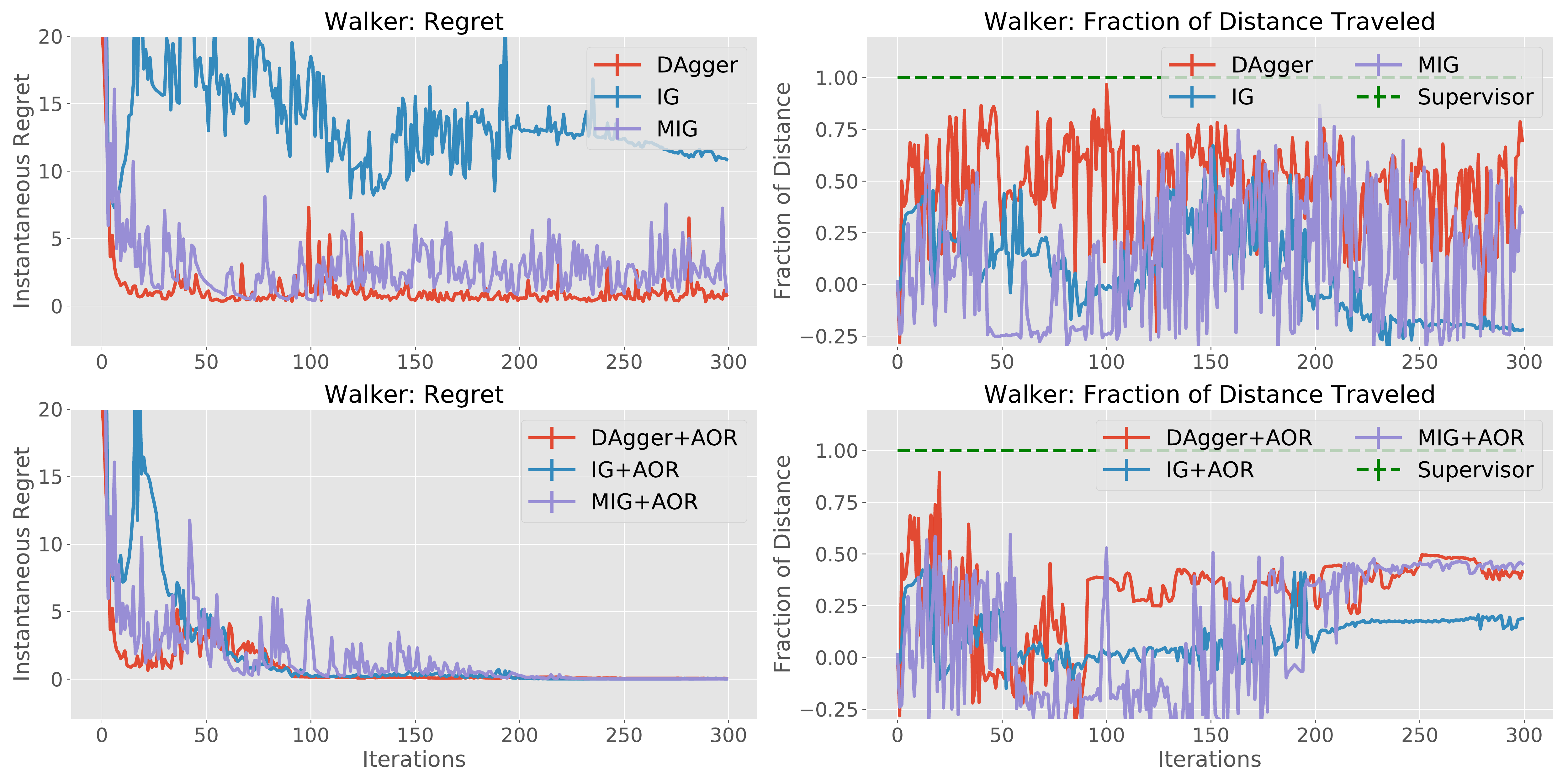}
	\caption{In the 2-dimensional Walker locomotion domain, the instantaneous regrets fail to converge to zero without \textsc{Aor} (top row) and the distance curves exhibit chattering. With \textsc{Aor} (bottom row), the average dynamic regret converges and chattering is reduced after 200 iterations.}
	\label{fig:walker}
	\vspace{-1.5em}
\end{figure}

Next, we consider a 2-dimensional walker from the OpenAI Gym, where the objective is to move the farthest distance. Again we induced difficulty and suboptimal policies by increasing the force magnitude of controls. 
%Additional information on the environment is in the appendix. 
Regularized regression was again used for the robot's policy. Here, we set $\alpha_1 = 1.0$ and $t = 0.1$ for \textsc{Aor}. We used the same initial $\eta$ values for IG and MIG and adjusted the stepsize as a function of $\hat \alpha_n$ when using \textsc{Aor}. The results are shown in Figure \ref{fig:walker}. Without adaptive regularization, the average dynamic regret fails to converge and all distance curves exhibit severe chattering with no stable learning. With \textsc{Aor}, average dynamic regret converges to zero and all distance curves are stabilized.

To induce high regret policies in the walker domain, we simply increased the force of controls five-fold. As in cart-pole, trajectories consisted of 200 time steps and one trajectory was collected and evaluated at each of the 300 iterations. Again, there was no stochasticity in the environment for the sake of computing the instantaneous regret.

\subsection{Hopper Locomotion}

\begin{figure}
	\centering
	\includegraphics[width=4in]{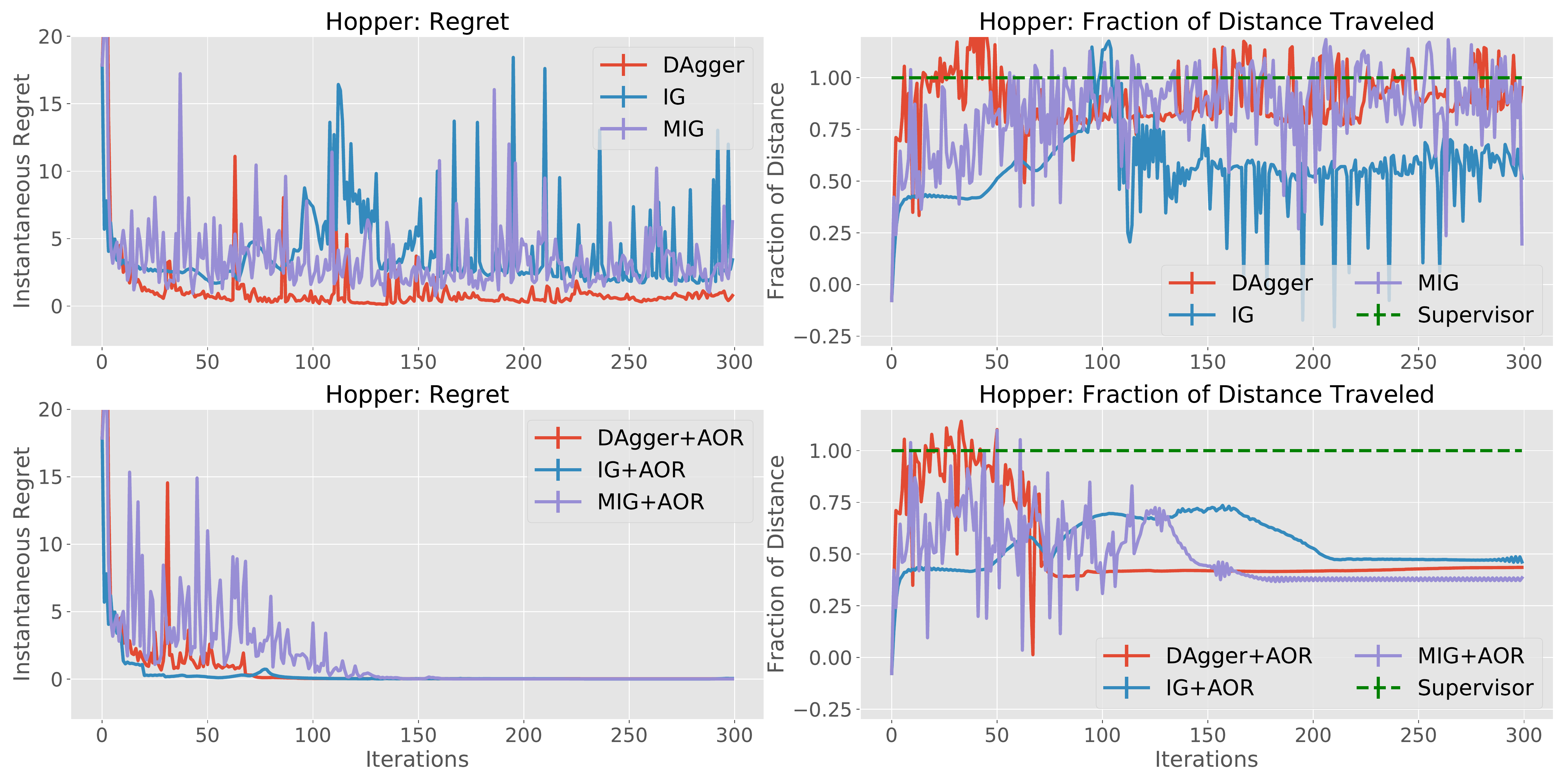}
	\caption{The algorithms are evaluated on OpenAI Gym hopper on the same conditions as the walker experiment. As observed in prior experiments, on-policy algorithms without proper regularization lead to unstable learning. With adaptive regularization the learning is stabilized; however, in this case, performance on the system is reduced on average.}
	\label{fig:hopper}
\end{figure}

We also evaluated the effect of \textsc{Aor} on a different OpenAI gym task, hopper, with exactly the same hyperparameter settings. The results are shown in Figure \ref{fig:hopper}. We note that although chattering is reduced, the distance traveled on average suffers. This is one disadvantage of adaptive regularization that was noted earlier. Increased regularization, although leading to convergence of average dynamic regret, can cause poor policy performance if it is excessive. This example reaffirms that from a practical perspective in imitation learning, it is important to monitor both the surrogate loss and whatever qualitative or quantitative metrics are actually desired to ensure that they agree.

\section{Discussion and Future Work}

Dynamic regret theory offers a promising framework for theoretical analysis in imitation learning. The question of whether an on-policy algorithm leads to converged or stable policies is inherently captured in dynamic regret, in contrast to static regret which captures policy performance on the average of the distributions. The theoretical analyses suggest new conditions to guarantee convergence. The simulation results suggest that the questions of convergence and optimality must be addressed when designing imitation learning robotics systems because stable performance is not always guaranteed. 
Indeed, even if static regret is found to be low, the distributions induced by the robot during training may be far too unpredictable for stable policy performance to actually be achieved.

Our results suggest that if the problem is sufficiently well-conditioned, we can guarantee convergence to a unique solution for these on-policy algorithms. This is surprising because intuitively initialization would seem to be an important factor, but does not ultimately affect the solution in the limit. We also draw connections between the on-policy imitation learning problem and variational inequalities and fixed-point problems by virtue of the continuous online learning framework \citep{cheng2019online}. Further exploration of these connections may reveal additional solution characteristics and efficient algorithms for solving imitation learning problems.

By modeling the problem with Assumption \ref{regularity}, we constrained the dynamics to aid the analysis. If other properties were known about the dynamics, then additional information could improve regret rates and conditions for optimality. For example, similar problem statements were studied in \cite{hall2015online} and  \cite{rakhlin2013optimization} in general online optimization. It was recently shown that augmenting the mostly model-free analyses with model-based learning can improve static regret bounds and performance \citep{cheng2019accelerating}. We hypothesize that dynamic regret rates can also be improved.

Furthermore, Assumption~\ref{regularity} excludes certain difficult systems with discontinuous dynamics such as hybrid systems. While the purpose of this article is to provide foundational theoretical results in the continuous setting, it is still not known whether these results can be applied to systems where the loss may be discontinuous on the parameter space. One possible direction is to show there are local regions that are sufficiently continuous to apply the current results and also invariant under the given on-policy algorithm. The problem then reduces to reaching these invariant sets. We leave exploration of this fascinating direction of discontinuities for future work.

\bibliographystyle{plainnat}
\bibliography{main} 

\appendix

\onecolumn

\section{Omitted Proofs}

% \red{\sout{For convenience, let $\nabla f_n(\theta) := \nabla_\theta f_n(\theta)$ denote the gradient in the evaluation parameter at the $n$th iteration.}} 

\normalsize{For convenience, the main assumptions of the paper are reproduced here:}
\begin{itemize}
	\item For all $\theta_1, \theta_2, \theta \in \Theta$, $\exists \alpha > 0$ such that
	\begin{align*}
	f_\theta(\theta_2) \geq f_\theta(\theta_1) + \langle \nabla f_\theta( \theta_1), \theta_2 - \theta_1 \rangle + \frac{\alpha}{2}\|\theta_1 - \theta_2\|^2.
	\end{align*}
	\item For all $\theta_1, \theta_2, \theta \in \Theta$, $\exists \gamma > 0$ such that
	\begin{align*}
	\|\nabla f_\theta (\theta_1) - \nabla f_\theta ( \theta_2) \| \leq \gamma \| \theta_1 - \theta_2\|
	\end{align*}
	and $\exists G > 0$ such that $\|\nabla f_\theta (\theta_1)\| \leq G$.
	\item For all $\theta' \in \Theta$, $\theta^*$ is in the relative interior of $\Theta$ where $\theta^* = \argmin_{\theta \in \Theta} f_{\theta'}(\theta)$. That is, $\nabla f_{\theta'}(\theta^*) = 0$.
	\item For all $\theta_1, \theta_2, \theta \in \Theta$, $\exists \beta> 0$ such that
	\begin{align*}
	\|\nabla f_{\theta_1}(\theta) - \nabla f_{\theta_2}(\theta)\| \leq \beta \|\theta_1 - \theta_2\|.
	\end{align*}
\end{itemize}

\subsection{Proof of Lemma \ref{lemma:decreasing-distance}}
The result of Lemma~\ref{lemma:decreasing-distance} is based on a technical lemma from \cite{zhang2017improved}.
%\begin{lemma}\label{lemma:decreasing-distance}
%	Let $\theta'$ be the current parameter played by the algorithm at any iteration, $\hat \theta= P_\Theta(\theta' - \eta \nabla f_n(\theta'))$ and $\theta_n^* = \argmin_\theta f_n(\theta)$. Then we have
%	\begin{align*}
%	\|\hat \theta - \theta_n^*\|^2 \leq \left(1 - \frac{2\alpha}{1/\eta + \alpha } \right) \|\theta' - \theta_n^* \|^2.
%	\end{align*}
%\end{lemma}
\begin{proof}
	By the update rule:
	\begin{align*}
	\hat \theta & = P_\Theta(\theta' - \eta \nabla f_n(\theta')) \\
	& = \argmin_{\theta \in \Theta} \|\theta' - \eta \nabla f_n(\theta') - \theta \|^2 \\
	& = \argmin_{\theta \in \Theta} \left\{2 \langle \eta\nabla f_n(\theta'), \theta  - \theta' \rangle + \|\theta' - \theta\|^2 + \|\eta^2 \nabla f_n(\theta') \|^2\right\} \\
	& = \argmin_{\theta \in \Theta} \left\{ f_n(\theta') + \langle \nabla f_n(\theta'), \theta  - \theta' \rangle + \frac{1}{2\eta}\|\theta' - \theta\|^2 \right\}  \\
	& = \argmin_{\theta \in \Theta} h_n(\theta)
	\end{align*}
	where we define $h_n(\theta) := f_n(\theta') + \langle \nabla f_n(\theta'), \theta  - \theta' \rangle + \frac{1}{2\eta}\|\theta' - \theta\|^2$. Note that $h(\theta)$ is $\frac{1}{\eta}$-strongly convex. So by applying Lemma \ref{lemma:stronglyconvex-simple} to $h_n$ and by the fact that $\hat \theta$ is the minimizer of $h$, we have 
	\begin{align*}
	h_n(\hat \theta) & \leq h_n(\theta_n^*) - \frac{1}{2\eta} \|\hat \theta - \theta^*_n\|^2
	\end{align*}
	By the strong convexity of $f_n$ it holds that $f_n(\theta') + \langle \nabla f_n(\theta'), \theta^*_n - \theta' \rangle \leq f_n(\theta^*_n) - \frac{\alpha}{2}\|\theta' - \theta^*_n\|^2$. By smoothness and the fact that $\eta < 1/\gamma$, we also have
	\begin{align*}
	f_n(\hat \theta) \leq f_n(\theta') + \langle \nabla f_n(\theta'), \hat \theta  - \theta' \rangle + \frac{\gamma}{2}\|\theta' - \hat \theta\|^2 \leq h_n(\hat \theta)
	\end{align*}
	Combining these inequalities gives
	\begin{align*}
	f_n(\hat \theta) \leq f_n(\theta^*_n) - \frac{\alpha}{2}\|\theta' - \theta_n^*\|^2 + \frac{1}{2\eta}\|\theta' - \theta_{n}^*\|^2 - \frac{1}{2\eta} \| \hat \theta - \theta_n^* \|^2
	\end{align*}
	By applying Lemma \ref{lemma:stronglyconvex-simple} again we have:
	\begin{align*}
	\frac{\alpha}{2}\|\hat \theta - \theta_n^*\|^2 \leq - \frac{\alpha}{2}\|\theta' - \theta_{n}^*\|^2 + \frac{1}{2\eta}\|\theta' - \theta_{n}^*\|^2 - \frac{1}{2\eta} \| \hat \theta - \theta_n^* \|^2
	\end{align*}
	The result can be obtained by rearranging and aggregating the terms and then simplifying.
	\end{proof}

\subsection{Proof of Lemma \ref{zhang-theorem}}

The result of Lemma \ref{zhang-theorem} is a slight modification from Theorem 3 from \cite{zhang2017improved}, the proof of which is reproduced in full here for completeness. Again before proving this theorem, we establish a crucial lemma. As in the proof of Theorem \ref{ogd_theorem}, we show a bound on the improvement from a single gradient step.

\begin{proof}%[Proof of Lemma \ref{zhang-theorem}]
	According to the update rule of multiple gradients we apply the result established in Lemma \ref{lemma:decreasing-distance} $K$ times which gives
	\begin{align*}
	\|\theta_{n+1} - \theta_n^*\|^2 & \leq \left(1 - \frac{2\alpha}{1/\eta + \alpha} \right)^K \|\theta_{n} - \theta_n^*\|^2 \\
	& \leq  \exp\left(- \frac{2\alpha K}{1/\eta + \alpha} \right) \|\theta_{n} - \theta_n^*\|^2 \\
	& \leq \frac{1}{4}\|\theta_n - \theta_n^*\|^2
	\end{align*}
	Then, as in Theorem \ref{ogd_theorem}, we bound the distances between the optimal parameters and the algorithm's parameters.
	\begin{align*}
	\sum_{n=1}^N\| \theta_n - \theta_n^*\|^2
	& = \| \theta_1 - \theta_1^*\|^2 + \sum_{n=2}^N\| \theta_n - \theta_{n-1}^* + \theta_{n-1}^* -  \theta_n^*\|^2 \\
	& \leq \| \theta_1 - \theta_1^*\|^2 + 2\sum_{n=2}^N \| \theta_n - \theta_{n-1}^*\|^2 + \|\theta_{n-1}^* -  \theta_n^*\|^2 \\
	& \leq \| \theta_1 - \theta_1^*\|^2 + \sum_{n=2}^N \frac{1}{2} \| \theta_{n-1} - \theta_{n-1}^*\|^2 + 2\|\theta_{n-1}^* -  \theta_n^*\|^2 \\
	& \leq 2\| \theta_1 - \theta_1^*\|^2 + 4\sum_{n=2}^N \|\theta_{n-1}^* -  \theta_n^*\|^2 \\
	& \leq 2\| \theta_1 - \theta_1^*\|^2 + 4S\left(\theta^*_{1:N}\right)
	\end{align*}
	Finally, by the smoothness of all $f_n$ we have
	\begin{align*}
	\sum_{n = 1}^N f_n(\theta_n) - f_n(\theta_n^*) & \leq \sum_{n=1}^N\frac{\gamma}{2}\| \theta_n - \theta_n^*\|^2 \\
	& \leq 2\gamma S(\theta^*_{1:N}) + \gamma \|\theta_1 - \theta_1^*\|^2.
	\end{align*}
	\end{proof}

\end{document}